\documentclass[10pt]{article} 
\usepackage[accepted]{tmlr}

\usepackage{amsmath,amsfonts,bm}

\def\eqref#1{equation~\ref{#1}}

\def\1{\bm{1}}

\DeclareMathAlphabet{\mathsfit}{\encodingdefault}{\sfdefault}{m}{sl}
\SetMathAlphabet{\mathsfit}{bold}{\encodingdefault}{\sfdefault}{bx}{n}

\usepackage{alp}
\usepackage{hyperref}
\usepackage{url}
\usepackage{pifont}
\usepackage{xcolor}
\definecolor{revisionblue}{RGB}{0,90,180}
\usepackage{booktabs}
\usepackage{graphicx}
\usepackage{makecell}
\usepackage{algorithm}
\usepackage{algpseudocode}
\usepackage{float}
\usepackage{adjustbox}
\usepackage{multirow}
\usepackage{subcaption}
\usepackage{wrapfig}
\usepackage{amsthm}
\usepackage{pgfplots}
\pgfplotsset{compat=1.18}

\newtheorem{theorem}{Theorem}

\title{Deterministic Prediction as Global Velocity: Flow Matching for Probabilistic Time-Series Forecast}

\author{\name Jiajing Hu \email jiajing.hu@tufts.edu \\
      \addr Department of Computer Science\\
      Tufts University
      \AND
      \name Xu Han \email xu.han@tufts.edu \\
      \addr Department of Computer Science\\
      Tufts University
      \AND
      \name Li-Ping Liu \email liping.liu@tufts.edu\\
      \addr Department of Computer Science\\
      Tufts University}

\def\month{10}  
\def\year{2026} 
\def\openreview{\url{https://openreview.net/forum?id=dDEDwFnnoA}} 

\begin{document}

\maketitle
\begin{abstract}

Accurate uncertainty quantification is critical for many time series forecasting applications, and recent studies have increasingly adopted generative models to capture predictive uncertainty. Existing  approaches often combine deterministic forecasting with generative modeling to improve predictive performance. However, their joint training is typically unstable because the deterministic model is embedded within the generative process, resulting in deep computational graphs and changing the probability transport path. In this work, we propose DV-Flow, a unified flow-matching formulation that incorporates the deterministic predictor as a parallel velocity component rather than as the endpoint of the transport path. This design enables stable end-to-end optimization while subsuming several existing formulations as special cases, allowing direct comparisons among them. Extensive experiments on six real-world datasets demonstrate that DV-Flow consistently improves probabilistic forecasting performance over strong baselines. Our code is available at \url{https://github.com/tufts-ml/dv-flow-time-series}.

\end{abstract}

\section{Introduction}
\label{intro}
Time series forecasting plays a crucial role in domains such as finance, logistics, and healthcare \citep{zhang2024trajectory}. Many downstream decision-making tasks built on forecasts require probabilistic quantification of future trajectories. From a modeling standpoint, this involves learning the conditional distribution of future time series given the past. Accordingly, many deterministic and probabilistic approaches were developed.

Deterministic forecasting models \citep{lin2024cyclenet, wu2021autoformer, zeng2023transformers, fan2022depts, box1970time, sims1980macroeconomics, liu2023itransformer, nie2022time, lim2021temporal} are typically trained to minimize the squared prediction error and therefore approximate the conditional mean of the data distribution. These models excel at producing accurate point forecasts, but their characterization of predictive uncertainty can only come from a Gaussian assumption derived from the mean squared error.

Probabilistic models such as diffusion models \citep{ho2020denoising, ho2022video} and flow-matching frameworks \citep{lipman2022flow, tong2023improving, liu2022flow} have demonstrated strong capabilities in modeling complex, high-dimensional distributions, and have been successfully applied to time series forecasting \citep{rasul2021autoregressive, kerrigan2024functional, tamir2024conditional, zhang2024trajectory, hu2024fm, gao2024auto, kollovieh2024flow, fan2024mg}. Although these models enable sampling of diverse future trajectories and thus support full-trajectory uncertainty quantification, they are not explicitly designed to recover the conditional mean, which remains a central quantity for forecasting.

To leverage the complementary strengths of deterministic and probabilistic models, recent works have explored hybrid formulations that combine both components. Existing approaches typically fall into two categories. Two-stage methods \citep{gong2024cascast, xu2025bridging} first train a deterministic predictor and then fit a probabilistic model conditioned on its output. Although stable, such approaches forgo the potential benefits of joint optimization. Alternatively, several recent methods \citep{li2025diffusion, srivastava2024precipitation, yu2024diffcast, li2024transformer} pursue fully end-to-end training by incorporating the deterministic predictor directly into diffusion or flow-matching formulations. In these approaches, the deterministic model either defines a residual variable or serves as a prior state of the generative path.

A key consequence of the previous joint training designs is that the generative model takes the output of the deterministic predictor as part of its input when constructing the intermediate states of the flow or diffusion process, forming deep computational graphs and yielding a non-stationary regression task during joint optimization. Similar optimization challenges arising from moving targets have been widely recognized in reinforcement learning and adversarial learning \citep{mnih2015human, goodfellow2014generative}. In addition, many existing end-to-end hybrid approaches optimize only the diffusion (or flow-matching) objective, updating both components without explicitly supervising the deterministic predictor, which may not learn the conditional mean and simply increases the network depth. 

In this work, we propose a novel joint-training flow-matching formulation that absorbs the deterministic forecast into the flow-matching velocity as a global velocity component. Specifically, the deterministic forecast gives a global velocity direction, which depends only on the historical context. The original flow-matching velocity is then retained as the local velocity, which is state- and time-dependent and captures the complex distribution of the target time series. Moreover, we add an auxiliary loss that explicitly supervises the deterministic component to leverage the strength of the deterministic point forecaster, while the local velocity captures higher-order variability and uncertainty. This formulation preserves the strengths of both deterministic and probabilistic models and enables principled, stable end-to-end joint training, leading to improved forecast performance and more reliable uncertainty quantification. 

We evaluate our model on six real-world benchmark datasets spanning diverse domains, including electricity consumption and stock prices, where reliable uncertainty quantification is critical for decision-making. Beyond comparisons with SOTA baselines, we identify a unified framework that captures existing hybrid formulations as special cases and conduct a systematic empirical study in a controlled experimental setting. This controlled comparison enables a direct assessment of how different hybrid formulations affect distribution fitting, velocity fitting, and optimization stability.

In summary, our work contributes to research on probabilistic forecasting of time series in three aspects:
\begin{itemize}
\item We introduce a \textbf{D}ual \textbf{V}elocity flow matching framework (DV-Flow) that treats the deterministic predictor as a velocity component and thereby enables stable end-to-end joint training.

\item We analyze the decomposition of probabilistic forecasting into a center prediction and variance estimation through trajectory sampling.

\item We evaluate DV-Flow through controlled experiments and on real-world datasets, demonstrating that the proposed DV-Flow method exhibits stable training dynamics and improves forecast probabilistic distribution performance.
\end{itemize}



\section{Related work}

Deterministic forecasting models generate single-point predictions of future values from past observations without modeling uncertainty. Classical statistical approaches such as ARIMA \citep{box1970time} capture linear dependencies through autoregression, differencing, and moving averages. Recently, Transformer-based models achieve strong results by modeling long-range dependencies with attention, such as Temporal Fusion Transformer (TFT) \citep{lim2021temporal}, PatchTST \citep{nie2022time}, and iTransformer \citep{liu2023itransformer}.

Probabilistic models forecast distributions rather than point estimates, enabling uncertainty quantification for risk-sensitive applications such as finance, supply chain, and healthcare. Early neural approaches like DeepAR \citep{salinas2020deepar} used RNNs to model conditional distributions across related series. More recent work leverages diffusion \citep{ho2020denoising, ho2022video} and flow-matching frameworks for probabilistic forecasting \citep{gao2024auto, fan2024mg}. Examples include TimeGrad \citep{rasul2021autoregressive}, which introduces diffusion-based autoregression, Diffusion-TS \citep{yuan2403diffusion}, which captures seasonality and trends with Fourier-based objectives, TSDiff \citep{kollovieh2024predict} which uses a self-guiding diffusion framework, and TSFlow \citep{kollovieh2024flow}, which applies flow matching with Gaussian Process priors.

To leverage the strength of both deterministic models and probabilistic models, several works have explored hybrid methods that combine them. $\text{D}^3\text{U}$ \citep{li2025diffusion} leverages a pretrained point forecasting model to predict the high-certainty part of the data and trains a separate diffusion model to learn the residual distribution. STVD \citep{srivastava2024precipitation} combines a deterministic downscaling component for a coarser prediction and a diffusion module to learn the residual error to add finer details. DiffCast \citep{yu2024diffcast} trains a global deterministic motion model and a local residual diffusion. CGFM \citep{xu2025bridging} combines a pretrained deterministic model and a flow matching model to learn the residual. TMDM \citep{li2024transformer} leverages a deterministic transformer model to learn the expectation of future trajectory based on context, and uses it as the endpoint prior in the diffusion process. NsDiff \citep{ye2025non} further generalizes this
deterministic-as-prior formulation by pretraining conditional mean and
variance estimators and using their predictions to define a
context-dependent Gaussian endpoint and an uncertainty-aware noise
schedule. These estimators are fixed during diffusion training, resulting
in a cascaded alternative to joint end-to-end optimization. CasCast \citep{gong2024cascast} combines a deterministic component and diffusion in a cascaded manner. It first trains a deterministic model, and then uses its prediction as condition in the downstream diffusion model.

\section{Preliminaries}


\paragraph{Flow matching.} Let $p^*(\by)$ over $\bbR^D$ be the data distribution for a given training set $\mathcal{Y}$. Flow matching models aim to learn an implicit generative model that transforms a simple prior distribution $p(\bepsilon)$, typically a standard Gaussian distribution, to the complex target data distribution. 
The implicit model is essentially an ODE defined by a vector field, which is learned by a neural network. 
%

A typical setting of flow matching is
Rectified Flow \citep{liu2022flow,lipman2022flow}, which defines the vector field with straight lines between data samples and random noise. Let $\by^1 \sim p^*(\by)$ and $\by^0 = \bepsilon \sim p^0 = \mathcal{N}(0,\bI)$. For $\tau \sim \mathcal{U}[0,1]$, $\by^{\tau} = \tau \by^1 + (1-\tau)\by^0$ forms a straight line between $\by^1$ and $\by^0$. At each location $\by^\tau$, the vector field conditioned on two endpoints is $u(\by^\tau, \tau \mid (\by^0, \by^1)) = \by^1-\by^0$. The vector field $v^*(\by^\tau, \tau) = \mathbb{E}_{\by^0, \by^1}[u(\by^\tau, \tau \mid (\by^0, \by^1))]$ by marginalizing over all pairs of $(\by^0, \by^1)$ is the ideal vector field that specifies the distribution $p^*$. The vector field is learned by minimizing the following loss: 
  \begin{align}
     L &= \mathbb{E}_{\tau, \by^0, \by^1} 
\big[ \| v_{\phi}(\mathbf{y}^{\tau}, \tau) -
u(\by^\tau, \tau \mid (\by^0, \by^1)) \|^2 \big] \nonumber \\
&= \mathbb{E}_{\tau, \by^0, \by^1} 
\big[ \| v_{\phi}(\mathbf{y}^{\tau}, \tau) -
v^*(\by^\tau, \tau) \|^2 \big] + 
\mathbb{E}_{\tau, \by^0, \by^1} 
\big[ \| v^*(\mathbf{y}^{\tau}, \tau) -
u(\by^\tau, \tau \mid (\by^0, \by^1)) \|^2 \big].
 \end{align}
 The training objective in the first line is easy to implement. It can be composed as the error in fitting the vector field and a constant term.  The objective computed on a validation set can also be used to compare different models to measure how well they fit the target vector field.

With a trained vector field $v_{\phi}(\by^{\tau},\tau)$, we can draw samples from the learned model by integrating through the ODE path:
\[
\by^1 = \by^0 + \int_{\tau=0}^1 v_{\phi}(\by^{\tau},\tau) d\tau, \quad \by^0 \sim p^0.
\]

\paragraph{Generative models in time series forecasting.} Consider a $D$-dimensional multivariate time series sampled at time points indexed by $t = 1, \ldots, T$. Given past observations $\bx_{1:t_0}$, where $\bx_t \in \mathbb{R}^D$, time series forecasting aims to learn the conditional future data distribution $p^*(\by_{t_0+1:T}|\bx_{1:t_0})$. For simplicity of notation, we will omit the subscript timestamps and use $\by$ to represent $\by_{t_0+1:T}$ and $\bx$ to represent $\bx_{1:t_0}$ in the following sections. Diffusion and flow matching have been successfully applied to time series forecasting. To apply a generative model to the forecasting task, we set $p^*(\by|\bx)$ as the target distribution, which is learned by a diffusion-based or flow-matching generative model specified by a neural network $v(\by^{\tau}, \tau, \bx)$. 

Unlike a typical generative task (e.g., image generation), previous observations $\bx$ provide rich information about the target and require special treatment. Sending $\bx$ to the neural network $v(\by^{\tau}, \tau, \bx)$ is often not enough to extract all the information about $\by$. An important approach is to train a deterministic predictive model $f_\theta(\bx)$ and let $f_\theta(\bx)$ contribute to the parameterization of $v(\by^{\tau}, \tau, \bx)$. 


Residual-based models such as $\text{D}^3\text{U}$ \citep{li2025diffusion}, STVD \citep{srivastava2024precipitation} and DiffCast \citep{yu2024diffcast} first compute the residual $\br_{\theta}^1 = \by - f_\theta(\bx)$ and then learn the distribution of $\br_{\theta}^1$ with a diffusion-based model, which is specified by the network $v_{\phi}(\br_{\theta}^{\tau},\tau, \bx)$. 
In this approach, the neural network cannot see the pattern (e.g., the trend) in the original time series.
  
Deterministic-as-prior models such as 
TMDM \citep{li2024transformer} use a deterministic prediction to specify the noise distribution in the generative model, so the model transforms ``informative'' noise to the target distribution. It runs denoising starting from $f_{\theta}(\bx) + \bepsilon$.


Both types of models can be implemented in the flow-matching framework using similar parameterizations, which we will detail below.

\section{Method}

\begin{figure*}[t]
\begin{center}
\centerline{\includegraphics[width=\textwidth]{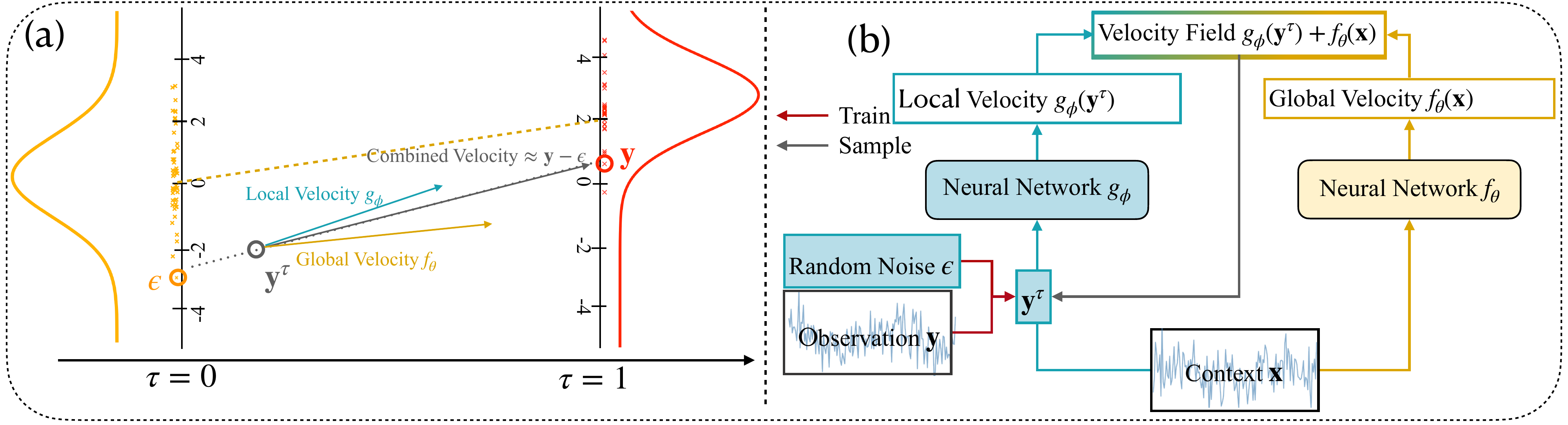}}
\caption{(a) \textbf{Proposed velocity combination framework.} The orange distribution on the left is the random noise at $\tau=0$, and the red distribution on the right is the target distribution at $\tau=1$. Yellow dotted line shows that the deterministic model that predicts the target endpoint can also be interpreted as a global velocity from origin to the center of the target distribution. The velocity from $\bepsilon$ to $\by$ is decomposed into a global velocity and a local velocity. (b) \textbf{Model architecture for DV-Flow.} We removed the nested dependency between $v_{\phi}$ and $f_{\theta}$, and combine them in parallel.}
\label{fig:flowg}
\end{center}
\vskip -0.2in
\end{figure*}

In this section, we introduce DV-Flow (\textbf{D}ual-\textbf{V}elocity Flow), a new hybrid flow-matching framework that combines global and local velocity fields for time series forecasting.
In this method, we use the deterministic component as a \emph{velocity term} rather than as an endpoint of the transport path. 

We still use Gaussian noise $\bepsilon$ as the start of the ODE in the flow matching model, and $p(\by | \bx)$ as the target distribution.
In particular, we parameterize the velocity as: 
\[
v_{\theta, \phi}(\by^\tau, \tau \mid \bx) = f_\theta(\bx) + g_\phi(\by^\tau, \tau, \bx),\label{eq:model-def}
\] 
where $f_\theta(\bx)$ aims for a deterministic prediction, which can be interpreted as a constant velocity, and $g_{\phi}$ is the neural network for the local velocity. A visual demonstration can be found in Figure~\ref{fig:flowg}. With $\by^{\tau} = (1-\tau)\bepsilon + \tau \by$, 
the loss function is:
\begin{equation}
    \mathcal{L} = \mathbb{E}_{\bepsilon,\by,\bx,\tau}\left[
\left\|
(\by - \bepsilon)
- v_{\theta, \phi}(\by^\tau, \tau \mid \bx)
\right\|^2
\right].
\end{equation}
From the standard flow matching \citep{lipman2022flow}, the optimal vector field $v_*(\by^\tau, \tau | \bx)$ for an input $\bx$ is
\begin{align}
v_*(\by^\tau, \tau | \bx) &= \mathbb{E}_{\bepsilon,\by}\left[\by - \bepsilon \mid \by^{\tau},\bx\right],
\end{align}
and the loss can be decomposed into $\mathcal{L} = D + C$ with
\begin{align}
D = \mathbb{E}_{\bepsilon,\by, \bx,\tau}\left[ \|v_{\theta, \phi}(\by^\tau, \tau \mid \bx) - v_*(\by^\tau, \tau | \bx) \|^2\right], \quad C= \mathbb{E}_{\bepsilon,\by,\bx,\tau}\left[\|(\by - \bepsilon) - v_*(\by^\tau, \tau | \bx)\|^2\right]. \label{eq:Cterm}
\end{align}
In this decomposition, the second term $C$ is a constant: it represents the difference between the marginal vector field and the conditional vector fields. Therefore, minimizing the loss is equivalent to minimizing the difference between the learned vector field $v_{\theta, \phi}$ and the true marginal vector field $v_*$.   



\paragraph{A comparison to other hybrid formulations.} \label{sec: comparable-loss} We present two alternative parameterizations that incorporate a deterministic forecasting component into the flow matching model. These approaches are adapted from existing diffusion-based forecasting methods and can be viewed as intermediate formulations bridging previous methods and the proposed approach.

In the first case, the model learns the distribution of the residual. Let $\br^0 = \bepsilon$ and $\br^1 = \by - f_\theta(\bx)$.
The expected vector field has the following relationship to the proposed method. 
\[
v_*^r(\br^\tau, \tau | \bx) = \mathbb{E}_{\bepsilon, \by}[\by - f_\theta(\bx) - \bepsilon \mid \br^{\tau}, \bx] =  
v_*(\by^\tau, \tau | \bx) - f_\theta(\bx).  
\]
Here $\br^\tau =  \by^\tau - \tau f_\theta(\bx)$. 
Suppose a neural network $v^r(\br^\tau, \tau, \bx)$ is parameterized to fit
$v_*^r$, then the training loss can still be decomposed into $D^r + C$, with 
$D^r = \mathbb{E}_{\bepsilon, \by}\left[\| v^r(\br^\tau, \tau, \bx) -  v_*^r(\br^\tau, \tau | \bx)\|^2 \right]$, and the term $C$ being exactly the same as in \eqref{eq:Cterm}. 

It has an equivalent form $v^{r,y}(\by^\tau, \tau, \bx) = v^{r}(\br^\tau, \tau, \bx) + f_\theta(\bx)$ with $v_*$ as the fitting target. Then we also have $D^r = \mathbb{E}_{\bepsilon, \by}\left[\| v^{r,y}(\by^\tau, \tau, \bx) -  v_*(\by^\tau, \tau | \bx)\|^2 \right]$. When $v^{r,y}(\by^\tau, \tau, \bx) = v_{\theta,\phi}(\by^\tau, \tau, \bx)$, the model defined on the residuals has the same distribution as the previous model, and the learning objective is also the same.  

In the second case, the noise is informative and contains the deterministic prediction $\bh^0 = (f_\theta(\bx) + \bepsilon)$ and $\bh^1 = \by$.
In this case, the optimal vector field is 
\[
v^h_*(\bh^\tau, \tau | \bx) = \mathbb{E}_{\bepsilon, \by}[\by - ( f_\theta(\bx) + \bepsilon) \mid \bh^{\tau}, \bx] =  
v_*(\by_\tau, \tau | \bx) - f_\theta(\bx).
\]
Here $\bh^\tau = \by^\tau + (1 - \tau) f_\theta(\bx)$. The neural network $v^h(\bh^\tau, \tau | \bx)$ in this setting has the same fitting target as $v^r(\br^\tau, \tau | \bx)$, although the input is different.  
The learning objective can be decomposed by $D^h + C$, with the constant term $C$ the same as the other two cases. The equivalent form $v^{h,y}(\by^\tau, \tau, \bx) = v^h(\bh^\tau, \tau, \bx) + f_\theta(\bx)$ should fit $v_*$, and $D^h$ also measures the error between $v^{h,y}$ and $v_*$.

The three parameterizations are equivalent on the output side. However, they take different inputs and behave differently in model training. For parameterization $v^r(\br^\tau, \tau | \bx)$, the residual $\br^1 = \by - f_\theta(\bx)$ depends on the output of the deterministic model. The same is true for $v^h(\bh^\tau, \tau | \bx)$ where $\bh^0 = \bepsilon + f_\theta(\bx)$. In both cases, the neural network that computes the vector field ($v^r$ or $v^h$) is deeply coupled with $f_\theta$. If they are jointly trained, the updates of $f_\theta$ continuously change the fitting targets of $v^r$ or $v^h$, creating additional difficulties in the training procedure. Furthermore, $f_\theta$ is not trained to make point-wise predictions, so it merely increases the depth of the network ($v^r$ or $v^h$) for vector computation.  

Since we can derive $v_{\theta,\phi}$, $v^{r,y}$, and $v^{h,y}$ from the three parameterizations, all of which fit $v_*$, we can measure their fitting errors by comparing their learning objectives on a validation set. In our experiment, we examine how they behave differently in model training.

\paragraph{Joint training.} In our new method, we jointly train the global and local velocity components in a combined objective:   
\begin{equation}
\label{eq:coflow}
    \mathcal{L} = \mathbb{E}_{\bepsilon,\by,\bx,\tau}\left[
\left\|
(\by - \bepsilon)
- \big(f_{\theta}(\bx) + g_\phi(\by^{\tau}, \tau, \bx) \big)
\right\|^2
\right] + \lambda \mathbb{E}_{\bx,\by}
\big[
\mathcal{L}_f(\by, f_\theta(\bx))
\big].
\end{equation}
The point-wise loss term $\mathcal{L}_f(\by, f_\theta(\bx))$ explicitly supervises the deterministic predictor. Common choices include MSE and MAE. When the conditional mean $\mathbb{E}[\by | \bx]$ is in a location with a high probability density, training $f_\theta(\bx)$ towards the distribution mean helps to improve  overall performance. However, when the mean is at a low-density location (e.g., in a bimodal distribution), it is not helpful for $f_\theta(\bx)$ to capture the mean.     
With our proposed parameterization method, the two neural networks $f_{\theta}(\bx)$ and $g_\phi(\by^{\tau}, \tau, \bx)$ do not directly depend
on each other, so the training is more stable than the other two methods.  

\paragraph{Relationship with distribution fitting.} We now study the proposed model from the perspective of mean-variance decomposition and inspect its relationship with predictive uncertainty. 

\begin{theorem} 
Let $v_{\theta,\phi}$ be the flow matching model defined in \eqref{eq:model-def}, and assume $v_{\theta,\phi}$ learns the true vector field $v_*$ for a given $\bx$. Let the time dependent vector field $g'(\br, \tau) = g_\phi(\br + \tau f_\theta(\bx), \tau, \bx)$, then the ODE defined by $g'$ pushes the Gaussian noise $\epsilon$ to the distribution with samples $\by - f_\theta(\bx)$. In particular, if $f_\theta(\bx) = \mathbb{E}_{\by}[\by \mid \bx]$, then the ODE defined by $g'$ pushes $\bepsilon$ to a zero-mean distribution. Alternatively, if $f_\theta(\bx)$ is a
coordinate-wise conditional median of $\by$ given $\bx$, then the ODE
defined by $g'$ pushes $\bepsilon$ to a residual distribution for which a coordinate-wise conditional median is zero.
\end{theorem}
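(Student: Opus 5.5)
The argument is a change of variables along the ODE trajectories. Fix $\bx$ and write $\bm{c} = f_\theta(\bx)$, a constant with respect to both $\tau$ and the state. By assumption $v_{\theta,\phi} = v_*$, so the ODE $\frac{d\by^\tau}{d\tau} = \bm{c} + g_\phi(\by^\tau,\tau,\bx)$, started at $\by^0 = \bepsilon \sim \mathcal{N}(0,\bI)$, transports $p^0$ to $p^*(\by\mid\bx)$ at $\tau=1$. This is the standard flow-matching fact recalled in the Preliminaries: the marginal field $v_*$ generates the marginal probability path of the interpolants $\by^\tau$, whose endpoint law is $p^*(\cdot\mid\bx)$. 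I take this as given.

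Next I define the shifted trajectory $\br^\tau = \by^\tau - \tau\bm{c}$, which mirrors the residual reparameterization $\br^\tau = \by^\tau - \tau f_\theta(\bx)$ in the comparison paragraph. Differentiating gives
\[
\frac{d\br^\tau}{d\tau} = \frac{d\by^\tau}{d\tau} - \bm{c} = g_\phi(\br^\tau + \tau\bm{c}, \tau, \bx) = g'(\br^\tau,\tau).
\]
So $\br^\tau$ solves the ODE defined by $g'$. Its initial condition is $\br^0 = \by^0 = \bepsilon$, and its endpoint is $\br^1 = \by^1 - \bm{c}$. The map $\by \mapsto \by - \bm{c}$ is a deterministic translation applied to each trajectory. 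Hence, if $\by^1 \sim p^*(\cdot\mid\bx)$, then $\br^1$ has the law of $\by - f_\theta(\bx)$ with $\by\sim p^*(\cdot\mid\bx)$. This proves the main claim.

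Uniqueness of solutions makes the flow map of $g'$ well defined and equal to the translated flow of $v_*$. For this I will assume $g_\phi$ is Lipschitz in its state argument (locally uniformly in $\tau$). This is the usual regularity assumption under which the transport statement for $v_*$ is meaningful, and translation by $\tau\bm{c}$ preserves it.

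The two special cases then follow from elementary properties of translation.
\begin{itemize}
\item If $\bm{c} = \mathbb{E}[\by\mid\bx]$, then $\mathbb{E}[\br^1] = \mathbb{E}[\by\mid\bx] - \bm{c} = 0$. This requires only that the first moment exists.
\item If $c_i$ is a median of $y_i$ given $\bx$, then $P(y_i - c_i \le 0) = P(y_i\le c_i) \ge 1/2$ and $P(y_i - c_i\ge 0)\ge 1/2$. Hence $0$ is a median of the $i$-th coordinate of the residual, for every $i$.
\end{itemize}

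The computation itself is routine. The only point that needs care is justifying that ``$v_{\theta,\phi}$ learns $v_*$'' implies the pushforward of $\bepsilon$ is exactly $p^*(\cdot\mid\bx)$, together with well-posedness of the ODE. I would address both by citing the flow-matching continuity-equation result under the Lipschitz assumption above, rather than reproving them.
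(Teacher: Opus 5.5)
Your proposal is correct and follows the paper's proof essentially step for step. It uses the shift $\br^\tau = \by^\tau - \tau f_\theta(\bx)$, differentiates to get the ODE driven by $g'$ with $\br^0 = \bepsilon$ and $\br^1 = \by^1 - f_\theta(\bx)$, and then applies translation equivariance of the mean and the coordinate-wise median. Your Lipschitz assumption for well-posedness and your direct check of the median condition add rigor that the paper leaves implicit, but they do not change the argument.
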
 
Theorem 1 is better understood from the residual formulation. We provide the proof in the Appendix~\ref{sec:appendixA}. 
We can check the theorem with two simple cases. Suppose $g_{\phi}(\by^\tau, \tau, \bx) = \bzero$, when $f_{\theta}(\bx)$ is supervised with MSE loss, then: 
\begin{align}
\mathbb{E}[\by^1|\bx] =  
 f_{\theta}(\bx)
,\quad \mathrm{Var}[\by^1|\bx] = \bI. \label{eq:example1}
\end{align}
In this case, $f_{\theta}(\bx)$ carries the zero-mean noise to a new location without changing its variance. In a more complex case 
where $g_{\phi}(\by^\tau, \tau, \bx) = \bA(\by^{\tau} - \tau f_\theta(\bx))$, then $g_{\phi}(\br, \tau) = \bA \br$, then:
\begin{align}
    \mathbb{E}[\by^1|\bx] &=f_{\theta}(\bx), \quad \mathrm{Var}[\by^1|\bx] = e^\bA(e^\bA)^\top.  \label{eq:example2}
\end{align}
However, if the linear function is applied to $\by^\tau$: $g_{\phi}(\by^\tau, \tau, \bx) = \bA\by^{\tau}$, then $g_{\phi}$ affects both the mean transformation and the covariance propagation, while 
$f_\theta(\bx)$ contributes only to the mean through the linear response $\bA^{-1}(e^\bA-I)f_{\theta}(\bx)$:
\begin{align}
    \mathbb{E}[\by^1|\bx]  = \bA^{-1}(e^\bA-I)f_{\theta}(\bx),  \quad
    \mathrm{Var}[\by^1|\bx] = e^\bA(e^\bA)^\top.
\end{align}
When $f_{\theta}(\bx)=\operatorname{Median}[\by^1|\bx]$, the two configurations of $g_\phi$ in \eqref{eq:example1} and \eqref{eq:example2} yield similar results, except that  $f_\theta(\bx)$ becomes the median. All derivations are given in the Appendix~\ref{sec:appendixA}. With a complex neural network $g_\phi$, the model can capture non-linear variances in the data and should perform better than a Gaussian model. 

\paragraph{Model Architecture.} 
\label{sec:SATN}
\begin{wrapfigure}{r}{0.42\textwidth}
    \centering
    \vspace{-10pt}
    \includegraphics[width=0.40\textwidth]{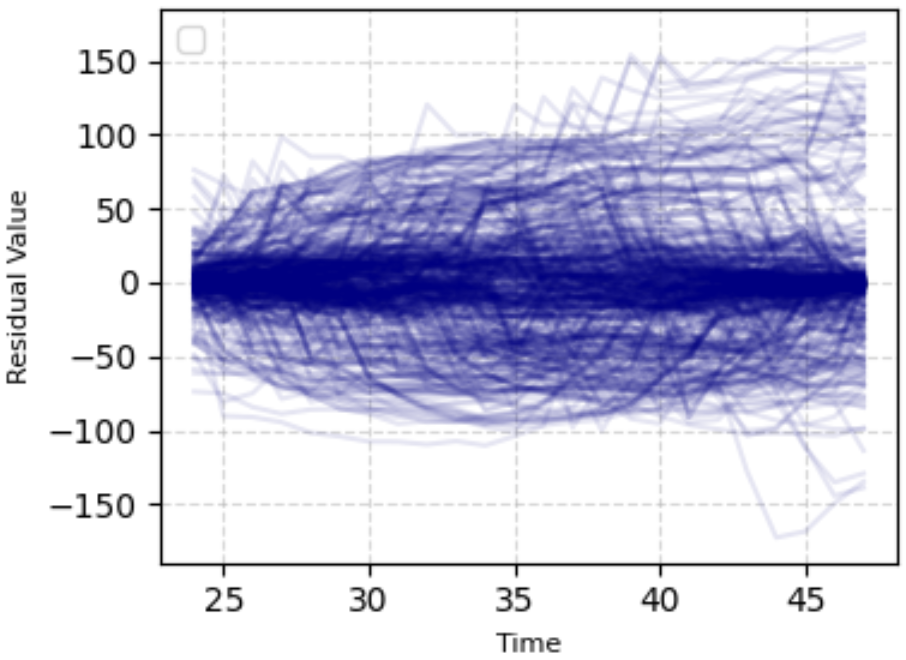}
    \caption{Residuals on the Electricity dataset after removing seasonality and centering by the last context observation. The variance increases with the forecasting horizon, motivating our time-adaptive normalization scheme.}
    \label{fig:electricity_residual}
    \vspace{-10pt}
\end{wrapfigure}
In the task of time series forecasting, we adopt iTransformer \citep{liu2023itransformer} as the backbone network for both $v_{\phi}(\by^{\tau}, \tau , \bx)$ and $f_{\theta}(\bx)$.

To better capture the correlation between neighboring steps, we sample noise $\bepsilon$ from a zero-mean Gaussian–process specified with Ornstein–Uhlenbeck (exponential) kernel $K (t,t') = \exp{(-\frac{|d|}{l})}$, where $d=t-t'$ is the temporal lag and $l$ is a non-negative correlation-length hyperparameter.

Proper normalization is essential for stable training. We propose Seasonal-Adjusted Time-Adaptive Normalization (SATN), which first removes seasonal components from the data and then applies a time-dependent normalization scheme ($\by_t \leftarrow \frac{\by_t - \bx_0}{\bsigma_t}$). Motivated by Brownian motion, we treat the last observation in the context window as the starting point of the forecasting trajectory and center the prediction window accordingly. Under this view, we assume that the residual variance grows with the forecasting horizon and normalize each time step using a linearly increasing variance schedule estimated from the data. Figure~\ref{fig:electricity_residual} illustrates the residuals of the Electricity dataset after removing seasonality and centering the prediction window by the last observation in the context sequence. We observe that the residual variance increases monotonically with the forecasting horizon: predictions closer to the context window exhibit smaller deviations, while those farther away become progressively more dispersed. This behavior motivates our time-adaptive normalization strategy, which explicitly accounts for the growth of uncertainty over time. This transforms the prediction sequence into a process with approximately unit marginal variance, improving alignment with the Gaussian prior used in flow matching and mitigating variance disparities across time steps. Additional details are provided in Appendix~\ref{sec:appendix_SATN}.

In the sampling procedure, the integration is computed from the Dormand–Prince 45 \citep{dormand1980runge} ODE solver. By solving Equation~\ref{eq:sample}, we obtain a transformed sample $\by^1$. This sample is then reverse scaled with SATN. A complete sampling procedure is summarized in Algorithm~\ref{alg:sampling}. Note that even though $f_{\theta}(\bx)$ is used at every step, since it is $\tau$-invariant, we only need to compute it once up front, and it can be treated as a fixed value in the integral. Therefore, adding the deterministic network does not slow down inference.
\begin{equation}
\label{eq:sample}
\hat{\by} = f_{\theta}(\bx); \quad 
\by^1 = \bepsilon + \int_{\tau=0}^1 v_{\theta, \phi}(\by^\tau, \tau, \bx) d\tau = \bepsilon + \int_{\tau=0}^1 (\hat{\by} +  g_\phi(\by^\tau, \tau, \bx)) d\tau 
\end{equation}

\section{Experiments}


In this section, we evaluate the ability of the proposed model to forecast time series and quantify uncertainty using datasets from real-world applications. In Section~\ref{sec:baselines} we compare DV-Flow with existing baselines, in Section~\ref{sec:hybrid} we study different hybrid formulations under our unified framework, and in Section~\ref{sec:pointwise_loss} we investigate the effect of the point-wise loss term in the objective. 

\paragraph{Datasets.} We use six widely adopted multivariate benchmark datasets: five from GluonTS \citep{alexandrov2019gluonts}-Electricity \citep{dua2017uci}, Exchange \citep{lai2018modeling}, Traffic \citep{dua2017uci}, Solar \citep{lai2018modeling}, KDDCup \citep{stolfo1999kdd}-and one finance dataset consisting of minute-level log returns of 486 stocks from S\&P 500 index. Dataset statistics can be found in Table~\ref{tab:datasets}. We use the standard train and test splits provided by GluonTS.

\begin{table}[H] 
  \caption{Dataset Information.}
  \label{tab:datasets}
  \centering
  \small
  \setlength{\tabcolsep}{6pt}   
  \begin{tabular}{lcccccc}         
    \toprule
    \bfseries Dataset & \bfseries Timesteps & \bfseries Channels & \bfseries Frequency & \bfseries Context & \bfseries Forecast & \bfseries Seasonal-window\\
    \midrule
    Electricity & 5{,}833   & 370 & Hour & 24 &24 & 24 \\
    Exchange    & 6{,}071   &   8 & Day  & 30 &30 & 30\\
    Traffic     & 4{,}001   & 963 & Hour & 24 &24&24\\
    Solar       & 7{,}009   & 137 & Hour & 24 &24&24\\
    KDDCup      & 8{,}448   & 270 & Hour & 24 &24&24\\
    SP500       & 255{,}840 & 486 & Minute & 30 &30&-\\
    \bottomrule
  \end{tabular}
\end{table}

\paragraph{Baselines.} We compare our model against one traditional statistical model ARIMA \citep{box1970time}, three deterministic models TFT \citep{lim2021temporal}, PatchTST \citep{nie2022time}, and iTransformer \citep{liu2023itransformer}; five probabilistic models: TimeGrad \citep{rasul2021autoregressive}, Diffusion-TS \citep{yuan2403diffusion}, DeepAR \citep{salinas2020deepar},  TSDiff \citep{kollovieh2024predict}, and TSFlow \citep{kollovieh2024flow}; and two hybrid models: TMDM \citep{li2024transformer} and $\text{D}^3\text{U}$ \citep{li2025diffusion}. To evaluate predictive probabilities, we sample $100$ trajectories for each probabilistic model on the test set. We use $150$ inference steps. 

\paragraph{Implementation details.} For both networks, the hidden dimensions are set to 1024, with 8 attention heads and 4 stacked attention layers. Models are trained using the AdamW optimizer \citep{loshchilov2017decoupled}, with a learning rate of $2e^{-4}$ and a weight decay of $1e^{-5}$. For the local velocity network, the flow-matching time step $\tau$ is encoded using Fourier positional embeddings \citep{vaswani2017attention} into $\mathbb{R}^{32}$, which is then appended to the input as an additional token in the transformer sequence. We use 8 workers on a single NVIDIA A100 GPU node for all experiments. Mean and standard deviation for all models are obtained from three independent runs.  We use $\lambda=0.3$ across all datasets for point-wise loss. We use the Dormand–Prince 45 ODE solver with an absolute tolerance of $\mathrm{atol}=10^{-6}$ and a relative tolerance of $\mathrm{rtol}=10^{-3}$.

\paragraph{Evaluation metrics.}  We use CRPS \citep{matheson1976scoring} to jointly assess the calibration
and sharpness of the predictive distribution represented by the forecast
samples. Following \cite{rasul2021autoregressive}, we use $\text{CRPS}_{\text{sum}}= \mathbb{E}_t \big[\text{CRPS}(\hat{F}^{t}_{\text{sum}}(\cdot),  \sum_{i=1}^D y_{i,t}) \big]$, which evaluates the predictive calibration and sharpness of the sum of each time series at each time step and then aggregates the CRPS values across time by averaging them. Here, $\hat{F}^{t}_{\mathrm{sum}}(\cdot)$ denotes the empirical cumulative distribution function (CDF) of the sum of the variables at time $t$. The empirical CDF is computed from samples from the predictive model. 


In this work, we propose a new evaluation metric Corr-$\text{CRPS}_{\text{mean}}$ that evaluates the uncertainty quantification of the correlation between time series, as the correlation is an important consideration in applications such as stock investment. Note that $\by \in\mathbb{R}^{(T-t_0)\times D}$ denote the observed $D$-dimensional time series in the future window. 
Let $Corr_{ij}$ denote the correlation between the $i$-th and $j$-th univariable time series in this time window. 
Here $Corr_{ij}$ can be viewed as a sample from the true distribution of correlation values. 
We then compute samples $\{\widehat{Corr}^{s}_{ij}: s = 1, \ldots, S\}$ from forecasting samples and estimate the CDF $\hat{F}_{ij}(\cdot)$ from these samples. 
The CRPS metric computed from $Corr_{ij}$ and $\hat{F}_{ij}(\cdot)$ measures how well forecast samples capture uncertainty in the correlation between time series $i$ and $j$. To obtain a single evaluation metric, we aggregate the CRPS over all $ij$ pairs and all $y$ instances in the test set. Formally, Corr-$\text{CRPS}_{\text{mean}}$ is defined as the follow where where $n$ indexes test instances:
$$
\operatorname{Corr\text{-}CRPS}_{\mathrm{mean}}
= 
\frac{2}{ND(D-1)} \sum_{n=1}^N
\sum_{i,j}
\operatorname{CRPS}
\left(
\widehat{F}_{ij}^{(n)},
Corr_{ij}^{(n)}
\right).
$$

We further evaluate the quantile forecasts derived from the sampled
predictive distributions. Following \citet{lim2021temporal}, we use
$q$-Risk, a normalized quantile-loss metric that evaluates the accuracy
of the predicted conditional $q$-quantile by penalizing
underestimation and overestimation. 
We report q-Risk at $q=0.9$, denoted as $Q90$. Details of the metrics can be found in Appendix~\ref{sec:evaluation}.

\subsection{Compare with existing baselines}
\label{sec:baselines}
In this section, we evaluate DV-Flow against the baseline methods with three metrics: CRPS on time series forecasts, CRPS on correlation among forecast time series dimensions, and quantile loss. In all tables, the best results are highlighted in $\textbf{bold}$, while the second best are \underline{underlined}.

\begin{table}[t]
    \caption{Test $\text{CRPS}_{\text{sum}} \downarrow$}
\label{crps_results}
\begin{adjustbox}{width=0.95\textwidth}
\begin{tabular}{lcccccc}
\toprule
Method & Electricity & Exchange & Traffic & Solar & KDDCup & SP500 \\
\midrule
ARIMA        & $0.6058 \pm 0.000$ & $0.0062 \pm 0.000$ & $0.7265 \pm 0.000$ & $0.5450 \pm 0.000$ & $\underline{0.1111} \pm 0.000$ & $1.0768 \pm 0.000$ \\
TFT          & $0.0276 \pm 0.001$ & $0.0085 \pm 0.002$  & $0.0582 \pm 0.026$ & $0.9509 \pm 0.357$ & $0.9375 \pm 0.110$ & $\underline{0.8260} \pm 0.072$ \\
PatchTST     & $0.0231 \pm 0.005$ & $0.0109 \pm 0.005$ & $0.0906 \pm 0.022$ & $0.8958 \pm 0.024$ & $0.9228 \pm 0.065$ & $0.9430 \pm 0.012$ \\
iTransformer & $0.0228 \pm 0.002$ & $0.0087 \pm 0.003$ & $0.0689 \pm 0.017$ & $0.9078 \pm 0.034$ & $0.5902 \pm 0.181$ & $0.8968 \pm 0.106$ \\
DeepAR       & $0.0373 \pm 0.012$ & $0.0077 \pm 0.002$ & $\underline{0.0434} \pm 0.004$ & $0.4020 \pm 0.003$ & $0.1600 \pm 0.066$ & $0.9455 \pm 0.020$ \\
Diffusion-TS & $0.0773 \pm 0.004$ & $0.0090 \pm 0.003$ & $0.5771 \pm 0.014$ & $0.5447 \pm 0.024$ & $0.4395 \pm 0.189$  & $0.9063 \pm 0.015$  \\
TimeGrad     & $\underline{0.0206} \pm 0.001$ & $0.0060 \pm 0.001$ & $0.0440 \pm 0.006$ & $0.3565 \pm 0.016$ & $0.1165 \pm 0.033$ & $0.9089 \pm 0.003$ \\
TSDiff & $0.0490 \pm 0.000$ & $0.0110 \pm 0.001$ & $0.0980 \pm 0.002$ & $0.3580 \pm 0.020$ & $0.3778 \pm 0.037$ & $0.9199 \pm 0.012$ \\

 TSFlow & $0.0239 \pm 0.000$ &  $0.0049 \pm 0.000$ & $0.1307 \pm 0.001$ &  $\underline{0.3408} \pm 0.036$ & $0.1179 \pm 0.028$ & $0.9786 \pm 0.047$  \\
TMDM & $0.0440 \pm 0.002$ & $\underline{0.0047} \pm 0.000$ & $0.0650 \pm 0.003$ & $1.2461 \pm 0.391$ & $0.1463 \pm 0.013$ & $0.8705 \pm 0.033$ \\
D$^3$U & $0.0261 \pm 0.002$ & $0.0059 \pm 0.001$ & $0.0586 \pm 0.004$ & $1.5622 \pm 0.296$ & $0.1114 \pm 0.020$ & $0.9750 \pm 0.034$ \\
\midrule
DV-Flow      & $\mathbf{0.0197} \pm 0.000$ & $\mathbf{0.0046} \pm 0.000$ & $\mathbf{0.0386} \pm 0.005$ & $\mathbf{0.3280} \pm 0.032$ & $\mathbf{0.0578} \pm 0.001$ & $\mathbf{0.7958} \pm 0.018$\\
\bottomrule
\end{tabular}
\end{adjustbox}
\end{table}

\begin{table}[t]
    \caption{Testset $\text{Corr-CRPS}_{\text{mean}} \downarrow$}
\label{corr_crps_results}
\begin{adjustbox}{width=0.95\textwidth}
\begin{tabular}{lcccccc}
\toprule
Method & Electricity & Exchange & Traffic & Solar & KDDCup & SP500 \\
\midrule
TFT          & $0.3769 \pm 0.029$ & $1.1208 \pm 0.142$  & $0.1356 \pm 0.001$ & $0.1178 \pm 0.000$ & $1.5412 \pm 0.052$ & $1.2477 \pm 0.299$ \\
PatchTST     & $0.3115 \pm 0.013$ & $0.8118 \pm 0.012$ & $0.1834 \pm 0.012$ & $0.8128 \pm 0.002$ & $0.8203 \pm 0.004$ & $0.7485 \pm 0.001$ \\
iTransformer & $0.2144 \pm 0.003$ & $0.0324 \pm 0.010$ & $0.1741 \pm 0.052$ & $0.8078 \pm 0.006$ & $0.8173 \pm 0.009$ & $0.7566 \pm 0.000$ \\
DeepAR       & $0.2242 \pm 0.107$ & $0.7804 \pm 0.040$ & $0.2724 \pm 0.343$ & $0.2074 \pm 0.018$ & $0.7160 \pm 0.004$ & $0.7554 \pm 0.000$ \\
Diffusion-TS & $0.2378 \pm 0.006$ & $0.8183 \pm 0.021$ & $0.2339 \pm 0.002$ & $0.1049 \pm 0.007$ & $1.3030 \pm 0.107$  & $1.2116 \pm 0.050$  \\
TimeGrad     & $0.1965 \pm 0.006$ & $0.6929 \pm 0.010$ & $0.1274 \pm 0.009$ & $0.0890 \pm 0.002$ & $0.6900 \pm 0.046$ & $0.7484 \pm 0.001$ \\
TSDiff & $\mathbf{0.1616} \pm 0.001$ & $0.7831 \pm 0.175$ & $\mathbf{0.0767} \pm 0.002$ & $\underline{0.0759} \pm 0.005$ & $0.6818 \pm 0.022$ & $0.7475 \pm 0.004$ \\
TSFlow & $0.1824 \pm 0.000$ & $\underline{0.6717} \pm 0.016$ & $\underline{0.0917} \pm 0.000$ & $0.0796 \pm 0.010$ & $\underline{0.6589} \pm 0.018$ & $0.7447 \pm 0.002$ \\
TMDM & $1.2088 \pm 0.401$ & $0.8092 \pm 0.042$ & $0.4789 \pm 0.311$ & $0.6877 \pm 0.046$ & $0.7815 \pm 0.031$ & $0.7528 \pm 0.001$ \\
D$^3$U & $0.2798 \pm 0.006$ & $0.8367 \pm 0.017$ & $0.1228 \pm 0.001$ & $0.1056 \pm 0.013$ & $0.7036 \pm 0.016$ & $\underline{0.7376} \pm 0.002$ \\
\midrule
DV-Flow      & $\underline{0.1761} \pm 0.002$ & $\mathbf{0.6380} \pm 0.013$ & $0.0934 \pm 0.003$ & $\mathbf{0.0752} \pm 0.004$ & $\mathbf{0.6544} \pm 0.007$ & $\mathbf{0.6256} \pm 0.002$\\
\bottomrule
\end{tabular}
\end{adjustbox}
\end{table}

\paragraph{$\text{CRPS}_{\text{sum}}$.} First, we compare DV-Flow against baselines in terms of marginal distribution probability. Table~\ref{crps_results} presents our main results on six datasets evaluated by $\text{CRPS}_{\text{sum}}$. DV-Flow substantially outperforms the other common baselines on all six datasets. This robust performance is evident across diverse domains, from the highly periodic patterns in Electricity to the more stochastic behavior in Exchange-Rate, demonstrating the wide applicability of our method. Specifically, the significant advantage in $\text{CRPS}_{\text{sum}}$ indicates that DV-Flow fits the distribution the best, highlighting the effectiveness of our hybrid velocity formulation with stable joint training.

\paragraph{$\text{Corr-CRPS}_{\text{mean}}$.} Our method achieves the best $\text{Corr-CRPS}_{\text{mean}}$ performance on 4 out of 6 datasets, demonstrating its strong ability to capture cross-dimensional dependency structures in multivariate forecasting. Unlike vanilla $\text{CRPS}_{\text{sum}}$ , which evaluates each dimension independently, $\text{Corr-CRPS}_{\text{mean}}$  explicitly measures whether the predicted samples preserve the correlation patterns observed in the ground truth. The consistent improvements across diverse datasets suggest that our model not only produces better marginal forecasts, but also better models the joint dynamics among variables. This is particularly important in applications such as finance and energy systems, where correlations between dimensions play a critical role in downstream decision-making.

\paragraph{Q90.} We then compare DV-Flow against baselines in terms of quantile risk. The results are presented in Table~\ref{tab:uq_metrics_vertical_column}, where DV-Flow demonstrates clear superiority over all baseline models. It ranks first on four of the six datasets and second best for the rest two. This result underscores the model's capacity to capture $90\%$ of future outcomes with tight and informative prediction intervals, avoiding overly conservative or overly wide uncertainty estimates. For this analysis, we re-evaluated all baselines using their official implementations to ensure a fair comparison; iTransformer's configuration was specifically aligned with our backbone for consistency. Details in Appendix~\ref{uq-exp}

\begin{table}[t]
    \caption{Testset Q90 $\downarrow$}
\label{tab:uq_metrics_vertical_column}
\begin{adjustbox}{width=0.95\textwidth}
\begin{tabular}{lcccccc}
\toprule
Method & Electricity & Exchange & Traffic & Solar & KDDCup & SP500 \\
\midrule
TFT     & $0.0566 \pm 0.015$ & $0.0117 \pm 0.006$ & $0.1516 \pm 0.023$ & $0.2116 \pm 0.042$ & $0.7132 \pm 0.012$ & $0.5868 \pm 0.028$ \\
PatchTST     & $0.0407 \pm 0.002$ & $0.0068 \pm 0.002$ & $0.1150 \pm 0.010$ & $0.4596 \pm 0.079$ & $0.6739 \pm 0.056$ & $0.5694 \pm 0.029$ \\
iTransformer     & $0.0350 \pm 0.001$ & $0.0078 \pm 0.002$ & $0.1273 \pm 0.018$ & $0.3029 \pm 0.036$ & $0.4976 \pm 0.044$ & $0.5492 \pm 0.004$ \\
DeepAR       & $0.0380 \pm 0.003$ & $\underline{0.0050} \pm 0.000$ & $\mathbf{0.0911} \pm 0.001$ & $0.3194 \pm 0.017$ & $0.3992 \pm 0.074$ & $\underline{0.5384} \pm 0.008$ \\
Diffusion-TS & $0.1112 \pm 0.008$ & $0.0258 \pm 0.004$ & $0.8279 \pm 0.029$ & $0.7985 \pm 0.060$ & $0.4212 \pm 0.056$  & $0.7122 \pm 0.029$  \\
TimeGrad     & $0.0412 \pm 0.004$ & $0.0082 \pm 0.000$ & $0.1120 \pm 0.021$ & $0.2181 \pm 0.010$ & $0.3048 \pm 0.036$ & $0.5565 \pm 0.005$ \\
TSDiff & $0.0403 \pm 0.003$ & $0.0166 \pm 0.014$ & $0.1411 \pm 0.004$  & $0.2916 \pm 0.018$ & $0.2804 \pm 0.043$ & $0.5472 \pm 0.013$ \\
TSFlow & $\mathbf{0.0329} \pm 0.000$ &  $0.0053 \pm 0.000$ & $0.1454 \pm 0.001$  & $\underline{0.1988} \pm 0.028$ & $0.2859 \pm 0.008$ &  $0.5636 \pm 0.037$ \\
TMDM & $0.1212 \pm 0.001$ & $0.0101 \pm 0.022$ & $0.3885 \pm 0.016$ & $1.9536 \pm 0.066$ & $0.4608 \pm 0.025$ & $0.6723 \pm 0.003$ \\
D$^3$U & $0.0423 \pm 0.002$ & $0.0058 \pm 0.001$ & $0.1360 \pm 0.001$ & $0.7703 \pm 0.090$ & $\underline{0.2529} \pm 0.044$ & $0.5898 \pm 0.004$ \\
\midrule
DV-Flow      & $\underline{0.0348} \pm 0.002$ & $\mathbf{0.0041} \pm 0.000$ & $\underline{0.1054} \pm 0.003$ & $\mathbf{0.1866} \pm 0.025$ & $\mathbf{0.2054} \pm 0.007$ & $\mathbf{0.5235} \pm 0.008$\\
\bottomrule
\end{tabular}
\end{adjustbox}
\end{table}

\subsection{Dissecting different hybrid formulations}
\label{sec:hybrid}

In this section, we conduct a thorough investigation of different settings in flow matching for time series forecasting. We mainly consider the setup of the flow-matching model for learning the target distribution. Across all three hybrid formulations, we use the same SATN preprocessing, Gaussian-process prior, two iTransformer networks, training budget, optimization settings, and point-wise auxiliary loss.

\paragraph{Distribution fitting.}
 
\begin{table}[t]
\centering
\caption{Test set performance across hybrid formulations. $\downarrow$}
\label{tab:hybrid_results}
\scriptsize

\begin{adjustbox}{width=0.95\textwidth}
\begin{tabular}{llrrrrrrrrrr}
\toprule
Dataset & Metric 
& Residual$_{0.3}$ 
& Residual$_0$ 
& DP$_{0.3}$ 
& DP$_0$  
& DV-Flow\\
\midrule
\multirow{2}{*}{Electricity}
& $\text{CRPS}_{\text{sum}}$ & $0.0326 \pm 0.016$ & $0.0288 \pm 0.017$ & $0.0239 \pm 0.001$ & $0.0264 \pm 0.001$ &  $\mathbf{0.0197} \pm 0.000$ \\
& Q90  & $0.0410 \pm 0.009$ & $0.0351 \pm 0.001$ & $0.0367 \pm 0.002$ & $0.0364 \pm 0.001$ & $\mathbf{0.0348} \pm 0.002$\\
\midrule
\multirow{2}{*}{Exchange}
& $\text{CRPS}_{\text{sum}}$ & $0.0052 \pm 0.000$ & $0.0052 \pm 0.000$ & $0.0049 \pm 0.000$ & $0.0049 \pm 0.001$ & $\mathbf{0.0046} \pm 0.000$ \\
& Q90  & $0.0042 \pm 0.000$ & $0.0043 \pm 0.000$ & $0.0041 \pm 0.000$ & $0.0041 \pm 0.001$ & $\mathbf{0.0041} \pm 0.000$ \\
\midrule
\multirow{2}{*}{Traffic}
& $\text{CRPS}_{\text{sum}}$ & $0.0604 \pm 0.004$ & $0.0665 \pm 0.028$ & $0.0561 \pm 0.020$ & $0.1204 \pm 0.021$ & $\textbf{0.0386} \pm 0.005$ \\
& Q90  & $0.1322 \pm 0.032$ & $0.1151 \pm 0.012$ & $0.1187 \pm 0.016$ & $0.1762 \pm 0.031$ & $\mathbf{0.1054} \pm 0.003$\\
\midrule
\multirow{2}{*}{Solar}
& $\text{CRPS}_{\text{sum}}$ & $0.4298 \pm 0.069$ & $0.4262 \pm 0.033$ & $0.3467 \pm 0.038$ & $0.3817 \pm 0.099$ & $\mathbf{0.3280} \pm 0.032$ \\
& Q90  & $0.2427 \pm 0.049$ & $0.2611 \pm 0.159$ & $\mathbf{0.1750} \pm 0.005$ & $0.2915 \pm 0.194$ & $0.1866 \pm 0.025$ \\
\midrule
\multirow{2}{*}{KDDCup}
& $\text{CRPS}_{\text{sum}}$ & $0.0933 \pm 0.041$ & $0.1045 \pm 0.020$ & $0.0793 \pm 0.013$ & $0.1046 \pm 0.028$ & $\mathbf{0.0578} \pm 0.001$ \\
& Q90  & $0.2244 \pm 0.012$ & $0.2338 \pm 0.009$ & $0.2226 \pm 0.009$ & $0.2278 \pm 0.010$ & $\mathbf{0.2054} \pm 0.007$\\
\midrule
\multirow{2}{*}{SP500}
& $\text{CRPS}_{\text{sum}}$ & $0.8139 \pm 0.033$ & $0.8215 \pm 0.037$ & $0.8326 \pm 0.031$ & $0.8132 \pm 0.038$ & $\mathbf{0.7958} \pm 0.018$\\
& Q90  & $0.5253 \pm 0.005$ & $0.5328 \pm 0.012$ & $0.5278 \pm 0.003$ & $0.5335 \pm 0.011$ & $\mathbf{0.5235} \pm 0.008$\\
\bottomrule
\end{tabular}
\end{adjustbox}
\end{table}

Table~\ref{tab:hybrid_results} compares the forecast performance across different hybrid formulations under a controlled environment, where DP denotes Deterministic-as-Prior, and subscripts (e.g., $_{0.3}$ or $_0$) indicate the weight $\lambda$ assigned to the point-wise deterministic loss. We use $\lambda=0.3$ to ensure a fair comparison with our proposed parameterization, and additionally consider  $\lambda=0$ to isolate the effect of the point-wise loss during training. Compared with Residual$_{0.3}$ and DP$_{0.3}$, DV-Flow achieves the lowest $\mathrm{CRPS}_{\mathrm{sum}}$ on all six datasets and the lowest $Q90$ on five of the six datasets. Because the three formulations use identical architectures, preprocessing, priors, auxiliary losses, optimization settings, and training budgets, these improvements isolate the benefit of the proposed dual-velocity parameterization. Additional controlled experiments using standard normalization and an independent Gaussian prior are reported in Appendix~\ref{app:controlled_exp}. Under these simplified settings, DV-Flow continues to outperform the residual-based and deterministic-as-prior formulations, providing further evidence that its advantage arises from the dual-velocity parameterization rather than the choice of normalization or prior.

Among the remaining hybrid variants, training with $\lambda=0.3$ performs more competitively than $\lambda=0$, further demonstrating that incorporating point-wise supervision for the deterministic predictor improves forecasting performance. 

More broadly, the results reveal that end-to-end coupling alone is insufficient for strong performance; without point-wise deterministic anchoring or gradient control, previous hybrid formulations do not have an advantage over DV-Flow. This reinforces our theoretical analysis: stable hybrid modeling requires both a fixed probability path and a properly aligned deterministic component. DV-Flow uniquely satisfies these conditions, explaining its consistent dominance across datasets and metrics.


\paragraph{Velocity error.} In this section, we investigate how effectively each hybrid formulation fits the velocity field. We have shown in Section~\ref{sec: comparable-loss} that the velocity error can be directly compared across hybrid methods. We therefore analyze how different formulations affect the velocity error. 
\begin{figure*}[ht]
\begin{center}
\centerline{\includegraphics[width=0.9\textwidth]{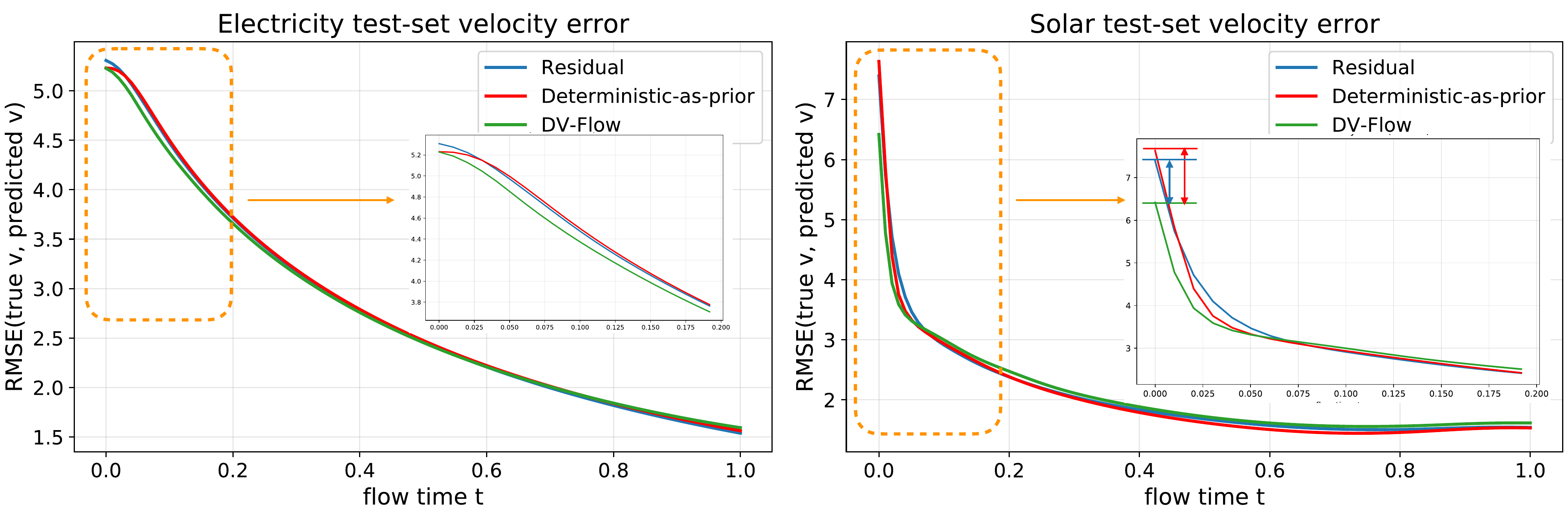}}
\caption{\textbf{Velocity Error Comparison.} We compare the RMSE of velocity predictions against ground truth on the test set for DV-Flow, residual-based, and deterministic-as-prior. While all three formulations converge as $\tau \rightarrow 1$, DV-Flow (green) starts with the lowest velocity error when $\tau \rightarrow 0$, demonstrating our formulation has an advantage when $\tau$ is small.}
\label{fig:velocity_error}
\end{center}
\end{figure*}
We evaluate the velocity prediction error using the test sets of Electricity and Solar. We compute the learning objective from Eq~\ref{eq:coflow}. As shown in Figure~\ref{fig:velocity_error}, DV-Flow achieves the lowest velocity error when the flow time $\tau$ is small, demonstrating the effectiveness of the proposed hybrid formulation. Errors in the early stages of ODE integration accumulate over time, making accurate velocity estimation near the noise distribution especially important for maintaining trajectory quality throughout the generation process. By reducing initial trajectory deviations, the proposed DV-Flow method improves the fidelity of the final generated samples. Velocity-error plots for the remaining datasets are provided in Appendix~\ref{app:velocity_error} and exhibit a similar pattern. In particular, DV-Flow consistently achieves lower velocity error near $\tau\rightarrow 0$ than the residual-based and deterministic-as-prior formulations, providing further evidence that the proposed dual-velocity parameterization more accurately estimates the velocity field during the early stages of the generative trajectory.

\paragraph{Training stability.} 
In this section, we empirically test the theoretical claim that state-dependent coupling in fully joint hybrid formulations leads to unstable training dynamics. 

\begin{figure}[htbp]
\centering
\includegraphics[width=0.45\linewidth]{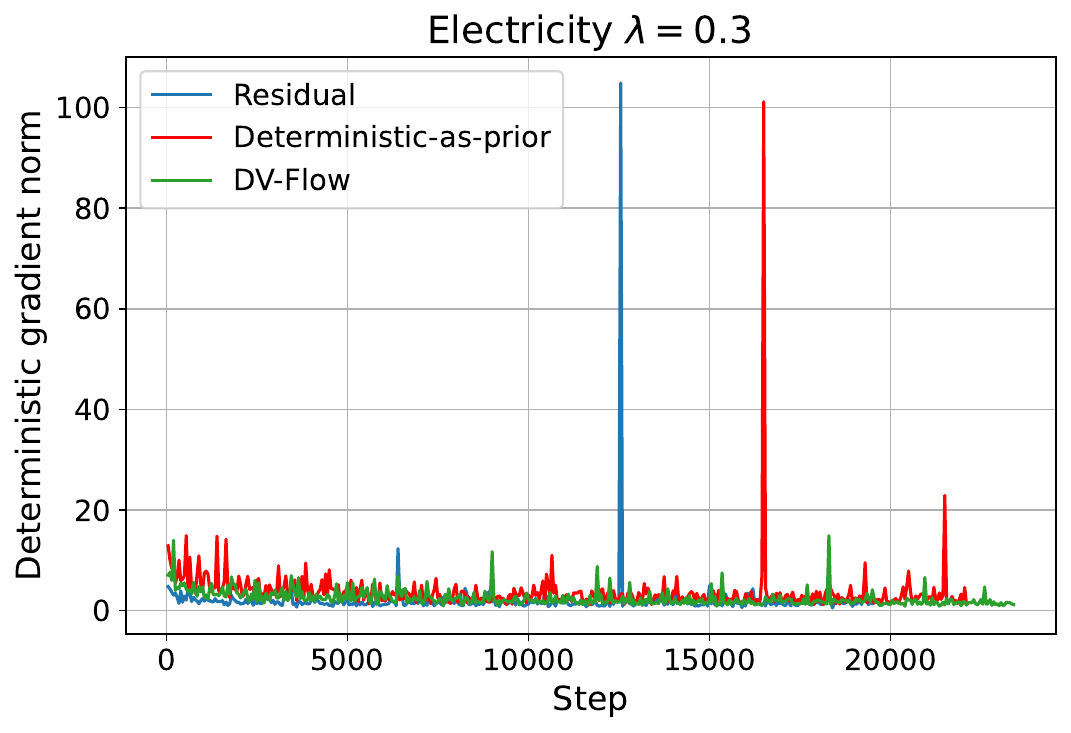}
\includegraphics[width=0.45\linewidth]{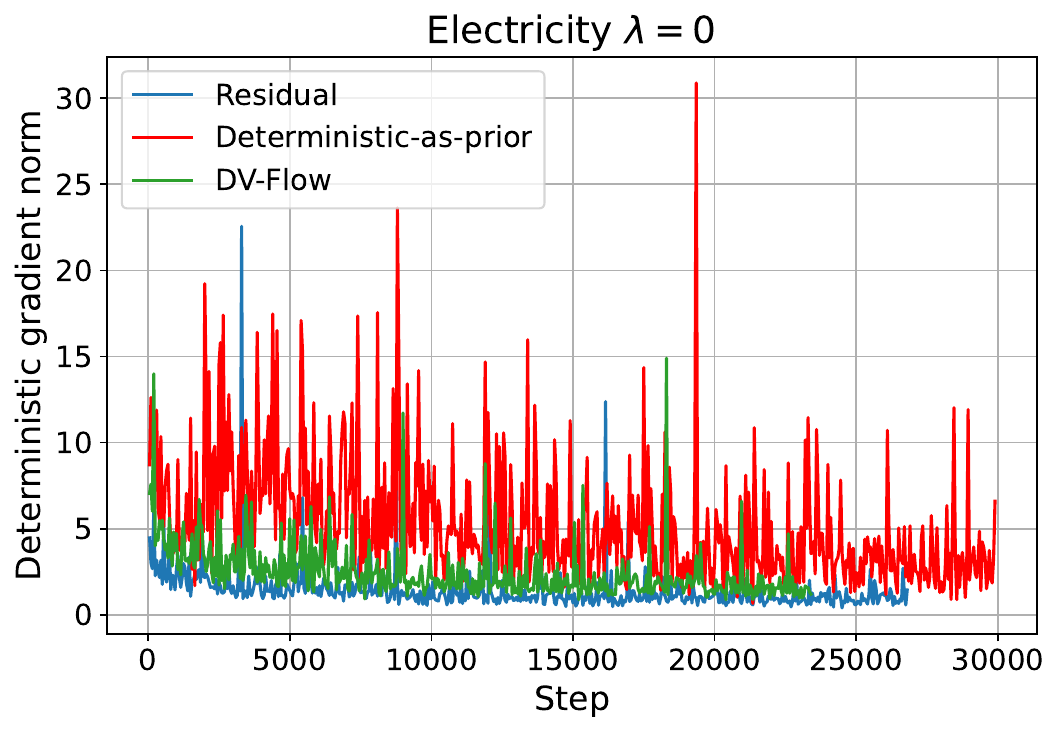}
\caption{\textbf{Gradient norm.}  On the left, we compare DV-Flow, the residual-based and deterministic-as-prior formulations with $\lambda=0.3$. On the right, we repeat the comparison with $\lambda=0$. In both settings, residual-based and deterministic-as-prior training exhibits large gradient spikes, whereas DV-Flow remains stable without large spikes.}
\label{fig:gradnorm}
\end{figure}



We evaluate training stability by measuring the trace of the gradient norm during training. In Figure~\ref{fig:gradnorm}, we compare DV-Flow, residual-based, and deterministic-as-prior variants for both $\lambda=0.3$ (left) and $\lambda=0$ (right). In both cases, residual-based and deterministic-as-prior exhibit large spikes in the gradient norm, whereas DV-Flow does not show such spikes. These sharp gradient spikes indicate optimization instability and support our hypothesis that nested coupling creates a deep computation graph, which leads to ill-conditioned gradients and unstable training. In contrast, the DV-Flow formulation avoids deep sequential coupling by placing the global and local velocities in parallel, resulting in stable gradient behavior without large spikes. Gradient-norm plots for the remaining datasets are provided in Appendix~\ref{app:grad_norm} and exhibit a similar pattern. Across these datasets, the residual-based and deterministic-as-prior formulations show pronounced gradient spikes, whereas DV-Flow maintains substantially more stable gradient norms throughout training.



\begin{figure}[t]
  \centering
  \begin{tabular}{ll}
   \raisebox{1mm}{
   \scalebox{0.9}{
   \input{figures/dom3_4_examples} 
   }}
   \scalebox{0.9}{
\definecolor{darkblue}{rgb}{0.05, 0.1, 0.7}
\begin{tikzpicture}
\begin{axis}[
  width=7.5cm, height=5cm,
  xlabel={$\lambda$},
  ylabel={Sliced Wasserstein distance},
  xlabel style={font=\small, yshift=1mm},
  ylabel style={font=\small, xshift=2mm},
  tick label style={font=\small},
  legend style={font=\small, at={(0.97,0.97)}, anchor=north east},
  xtick={0,1,2,3,4},
  xticklabels={0,1,10,100,1000},
  ymin=0.02251,
  ytick={0.02300,0.02400},
  grid=both,
  grid style={line width=0.2pt, draw=gray!30},
  major grid style={line width=0.4pt, draw=gray!50},
  clip=false,
]

\addplot [red!20, fill=red!20, forget plot]
  coordinates {(0,0.022871) (4,0.022871)
               (4,0.023110) (0,0.023110)}
  -- cycle;

\addplot [red, dashed, thick, forget plot]
  coordinates {(0,0.022990) (4,0.022990)};
\addlegendimage{red, dashed, thick}
\addlegendentry{pre-train $f_\theta$}

\addplot [darkblue, thick, mark=*, mark size=2pt, error bars/.cd,
          y dir=both, y explicit]
  coordinates {(0,0.024120) +- (0,0.000142) (1,0.023217) +- (0,0.000116) (2,0.022979) +- (0,0.000114) (3,0.022896) +- (0,0.000120) (4,0.022961) +- (0,0.000111)};
\addlegendentry{joint training}

\end{axis}
\end{tikzpicture}
}
  \end{tabular}
  \caption{Investigation of the effect of point-wise loss on a 
  toy dataset generated from a Dirichlet distribtuion.}
  \label{fig:toy-example1}
\end{figure}

\subsection{Investigation of the effect of the point-wise loss}
\label{sec:pointwise_loss}

We now study the effect of the point-wise loss across different regularization strengths and loss functions.

\paragraph{The effect of point-wise loss in a toy example.} We first study the proposed method on a toy dataset. We generate data from a Dirichlet distribution, where the input $\bx = (x_1, x_2, x_3)$ has three entries. For each sample, one component is selected uniformly at random and drawn from $[3, 4]$, while the remaining two components are independently drawn from $[1, 2]$. We then sample a probability vector from a Dirichlet distribution parameterized by $\bx$, and define the target $\by = (y_1, y_2)$ as the first two entries of the sampled probability vector. The conditional distribution of $\by$ is fully determined by $\bx$ and is known analytically.


We train our model using 1,024 $(\bx, \by)$ training samples. After training, we generate samples of $\by$ conditioned on each $\bx$ and compare the resulting distributions with the ground-truth distributions using the sliced Wasserstein distance, where a smaller value indicates better performance. Figure~\ref{fig:toy-example1} presents the results for the first toy dataset, while an additional example is provided in Figure~\ref{fig:toy-example2} in the appendix. In the first example, $\lambda = 100$ achieves the best results, although its improvement over the variant with a pre-trained $f_\theta$ is relatively small. In contrast, for the second toy dataset in Appendix~\ref{app:more-results-on-toy-dataset}, which poses a more challenging learning problem, setting $\lambda = 0$ yields significantly better performance than using a pre-trained $f_\theta$. Overall, our experiments do not reveal a universally optimal choice of $\lambda$. Treating $\lambda$ as a tunable hyperparameter therefore appears to be a promising direction for further improving predictive performance.


\paragraph{Strength of regularization from the point-wise loss.} We compare three  strategies: no-deterministic component, pre-train the deterministic component ($\lambda=+\infty$), and joint training without point-wise loss ($\lambda=0$). For the no-deterministic configuration, we additionally evaluate a doubled-size neural network with a parameter count comparable to that of DV-Flow, in order to determine whether DV-Flow’s performance gains arise simply from increased model capacity.
As shown in Table~\ref{tab:ablation_results}, DV-Flow outperforms all three formulations in terms of $\text{CRPS}_{\text{sum}}$ across all datasets, and achieves the best $Q90$ on four out of five datasets, ranking second on the remaining one. This demonstrates that each component contributes to performance: the deterministic model improves the conditioning of context, joint training enables effective interaction between components, and the point-wise loss ensures proper alignment of the deterministic predictor.


\begin{table}[t]
\centering
\caption{Test set performance across $\lambda$ settings. $\downarrow$}
\label{tab:ablation_results}
\scriptsize
\begin{adjustbox}{width=0.95\textwidth}
\begin{tabular}{llrrrrr}
\toprule
Method & Metric 
& Electricity 
& Exchange
& Traffic 
& Solar 
& KDDCUP 
\\
\midrule
\multirow{2}{*}{No-deterministic}
& $\text{CRPS}_{\text{sum}}$ & $0.0218 \pm 0.001$ & $0.0063 \pm 0.002$ & $0.0747 \pm 0.039$ & $0.3964 \pm 0.052$ & $0.1052 \pm 0.053$\\
& Q90  & $0.0349 \pm 0.000$ & $0.0051 \pm 0.001$ & $0.1247 \pm 0.025$ & $\mathbf{0.1633} \pm 0.010$ & $0.2261 \pm 0.013$ \\
\midrule
\multirow{2}{*}{No-deterministic 2X}
& $\text{CRPS}_{\text{sum}}$ & $0.0291 \pm 0.006$ & $0.0050 \pm 0.000$ & $0.0492 \pm 0.007$ & $0.4007 \pm 0.023$ & $0.1213 \pm 0.022$  \\
& Q90  & $0.0411 \pm 0.005$ & $0.0052 \pm 0.001$ & $0.1070 \pm 0.002$ & $0.2412 \pm 0.092$ & $0.2312 \pm 0.004$ \\
\midrule
\multirow{2}{*}{Pre-train}
& $\text{CRPS}_{\text{sum}}$ & $0.0235 \pm 0.002$ & $0.0053 \pm 0.000$ & $0.0672 \pm 0.010$ & $0.5119 \pm 0.007$ & $0.2002 \pm 0.137$  \\
& Q90  & $0.0418 \pm 0.004$ & $0.0042 \pm 0.000$ & $0.1434 \pm 0.006$ & $0.3317 \pm 0.035$ & $0.2687 \pm 0.022$ \\
\midrule
\multirow{2}{*}{$\lambda=0$}
& $\text{CRPS}_{\text{sum}}$ & $0.0301 \pm 0.000$ & $0.0054 \pm 0.001$ & $0.0426 \pm 0.008$ & $0.4006 \pm 0.102$ & $0.1213 \pm 0.022$  \\
& Q90  & $0.0378 \pm 0.002$ & $0.0043 \pm 0.000$ & $0.1063 \pm 0.004$ & $0.2590 \pm 0.134$ & $0.2312 \pm 0.004$ \\
\midrule
\multirow{2}{*}{DV-Flow/MSE}
& $\text{CRPS}_{\text{sum}}$ & $0.0203 \pm 0.002$ & $0.0047 \pm 0.000$ & $\mathbf{0.0372} \pm 0.004$ & $0.3318 \pm 0.029$ & $0.1028 \pm 0.014$  \\
& Q90  & $\mathbf{0.0343} \pm 0.002$ & $\mathbf{0.0039} \pm 0.000$ & $0.1057 \pm 0.002$ & $0.1963 \pm 0.017$ & $0.2230 \pm 0.007$ \\
\midrule
\multirow{2}{*}{DV-Flow/MAE}
& $\text{CRPS}_{\text{sum}}$ & $\mathbf{0.0197} \pm 0.000$ & $\mathbf{0.0046} \pm 0.000$ & $0.0386 \pm 0.005$ & $\mathbf{0.3280} \pm 0.032$ & $\mathbf{0.0578} \pm 0.001$  \\
& Q90  & $0.0348 \pm 0.002$ & $0.0041 \pm 0.000$ & $\mathbf{0.1054} \pm 0.003$ & $0.1866 \pm 0.025$ & $\mathbf{0.2054} \pm 0.007$ \\
\bottomrule
\end{tabular}
\end{adjustbox}
\end{table}
\begin{wrapfigure}{r}{0.42\textwidth}
    \centering
    \includegraphics[width=\linewidth]{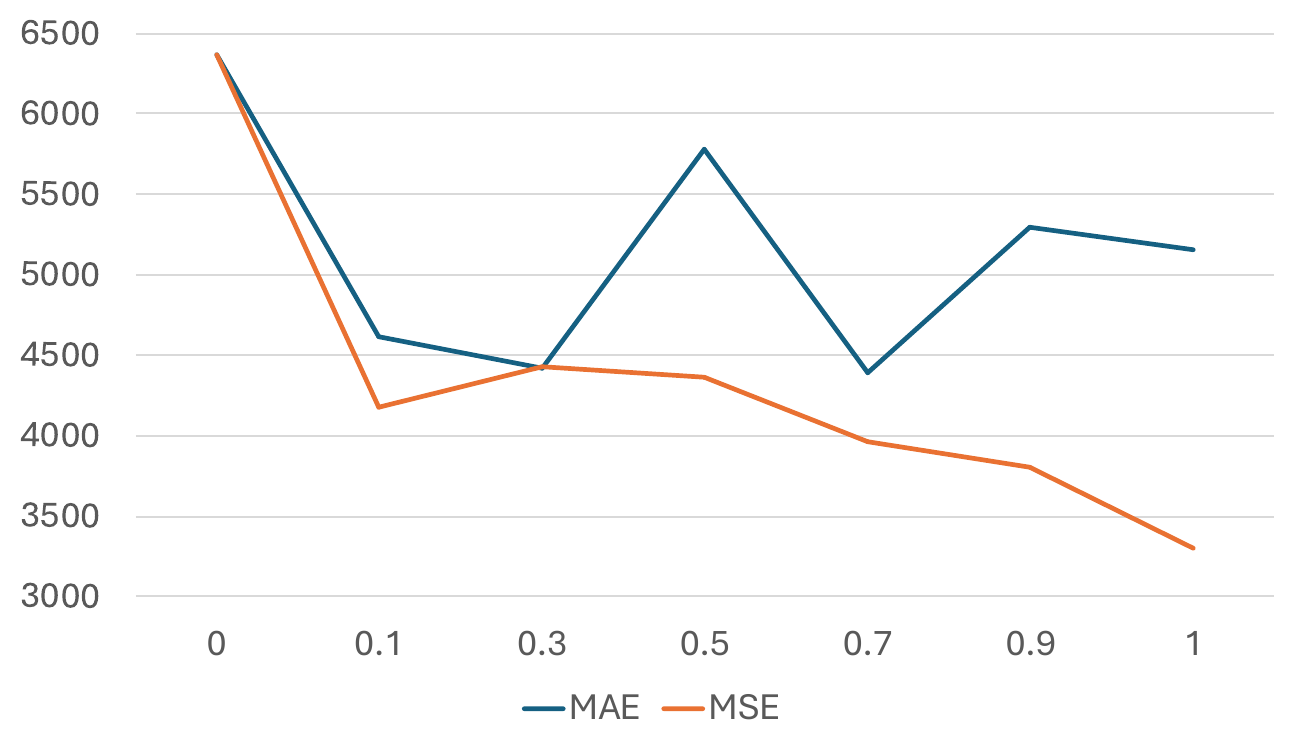}
    \caption{$\ell_2$ distance between the prediction mean and median across various $\lambda$ levels.}
    \label{fig:msevsmae}
    \vspace{-15pt}
\end{wrapfigure}
\paragraph{Choice of point-wise loss.} We use MAE as the point-wise loss term in the objective function. As shown in \cite{baringhaus2004new, szekely2005hierarchical, gneiting2007strictly}, CRPS can be written as the following, where $Z, Z' \sim F$  are independent samples and $x$ is the observation:
\begin{equation} 
\label{eq:crps}
\text{CRPS} = \mathbb{E}_F|Z-x| - \frac{1}{2}\mathbb{E}_F|Z-Z'|.  
\end{equation}

It shows that MAE is closely aligned with the CRPS evaluation objective
used in probabilistic forecasting. This formulation highlights CRPS as a distributional generalization of
MAE: the first term corresponds to the expected absolute deviation from the observation, while the second
term penalizes excessive dispersion. In the degenerate case where F collapses to a point mass, CRPS reduces
to MAE. Therefore, MAE provides a more natural auxiliary objective when the downstream goal is strong
CRPS performance. Under this view, the deterministic model supplies a median-centered baseline forecast,
while the flow component focuses on refining the predictive distribution around this baseline. Table~\ref{tab:ablation_results} shows that training with MAE loss has an overall better $\text{CRPS}_{\text{sum}}$ than training with MSE loss.

Furthermore, MAE supervision preserves skewness more effectively than MSE supervision on skewed data. We demonstrate it with the Electricity dataset, which exhibits large outliers during peak usage periods. As shown in Figure~\ref{fig:msevsmae}, the $\ell_2$ distance between the prediction mean and median decreases as $\lambda$ increases under MSE training, while remaining relatively stable under MAE training.

\subsection{Ablation}

\begin{table}[t]
\centering
\caption{Ablation study on dual velocity $\downarrow$}
\label{tab:ablation_dv_results}
\scriptsize
\begin{adjustbox}{width=0.95\textwidth}
\begin{tabular}{llrrrrr}
\toprule
Method & Metric 
& Electricity 
& Exchange
& Traffic 
& KDDCUP 
& SP500
\\
\midrule
\multirow{2}{*}{vanilla FM}
& $\text{CRPS}_{\text{sum}}$ & $0.0257 \pm 0.004$ & $0.0065 \pm 0.000$ & $0.0517 \pm 0.005$ & $0.2426 \pm 0.089$ & $0.8884 \pm 0.051$\\
& Q90  & $\mathbf{0.0344} \pm 0.001$ & $0.0059 \pm 0.000$ & $0.1080 \pm 0.001$ & $0.3461 \pm 0.059$ & $0.5246 \pm 0.005$ \\
\midrule
\multirow{2}{*}{vanilla FM + SATN}
& $\text{CRPS}_{\text{sum}}$ & $0.0259 \pm 0.003$ & $0.0056 \pm 0.000$ & $0.0546 \pm 0.006$ & $0.1709 \pm 0.092$ & $0.8838 \pm 0.052$  \\
& Q90  & $0.0396 \pm 0.005$ & $0.0046 \pm 0.000$ & $0.1204 \pm 0.015$ & $0.2683 \pm 0.051$ & $0.5258 \pm 0.003$ \\
\midrule
\multirow{2}{*}{vanilla FM + SATN + GP}
& $\text{CRPS}_{\text{sum}}$ & $0.0247 \pm 0.003$ & $0.0053 \pm 0.000$ & $0.0461 \pm 0.011$ & $0.1407 \pm 0.020$ & $0.8560 \pm 0.016$  \\
& Q90  & $0.0368 \pm 0.002$ & $0.0043 \pm 0.000$ & $0.1065 \pm 0.002$ & $0.2341 \pm 0.001$ & $0.5382 \pm 0.015$ \\
\midrule
\multirow{2}{*}{DV-Flow}
& $\text{CRPS}_{\text{sum}}$ & $\mathbf{0.0197} \pm 0.000$ & $\mathbf{0.0046} \pm 0.000$ & $\mathbf{0.0386} \pm 0.005$ & $\mathbf{0.0578} \pm 0.001$ & $\mathbf{0.7958} \pm 0.018$ \\
& Q90  & $0.0348 \pm 0.002$ & $\mathbf{0.0041} \pm 0.000$ & $\mathbf{0.1054} \pm 0.003$  & $\mathbf{0.2054} \pm 0.007$ & $\mathbf{0.5235} \pm 0.008$ \\
\bottomrule
\end{tabular}
\end{adjustbox}
\end{table}

To distinguish the contribution of the proposed dual-velocity
parameterization from those of SATN and the Gaussian-process prior, we
conduct a controlled incremental ablation. Starting from vanilla flow
matching, we progressively introduce SATN, the Gaussian-process prior, and
the dual-velocity parameterization. All configurations use the same
backbone architecture, optimization settings, and training budget.
As shown in Table~\ref{tab:ablation_dv_results}, DV-Flow achieves the
lowest $\mathrm{CRPS}_{\mathrm{sum}}$ across all datasets, often by a
substantial margin, and the lowest $Q90$ on four of the five datasets.
Most importantly, the comparison between the complete DV-Flow model and  vanilla flow matching with both SATN and the Gaussian-process prior differs
only in the velocity parameterization. Therefore, the improvement between
these configurations directly isolates the contribution of the
proposed dual-velocity design, rather than attributing it to SATN or the
Gaussian-process prior.

\section{Limitation}
DV-Flow moderately increases the number of learnable parameters and the runtime during both training and forecasting. Adding the deterministic predictor increases the total parameter count of DV-Flow to approximately $1.3\times$ that of the corresponding single-network model. The training time per epoch is approximately $1.4\times$ that of the single-network model, $2.2\times$ that of representative probabilistic-only baselines, and $1.4\times$ that of other hybrid baselines. The peak GPU memory usage is approximately $1.14\times$ that of the single-network model and approximately the same as that of the hybrid baselines. DV-Flow requires approximately $1.1\times$ the inference time of flow-based single-network baseline, and $2.1\times$ that of the diffusion-based hybrid baselines. We use 100 function evaluations across all models for the inference time analysis. We use 100 function evaluations across all models for the inference time analysis. 


\section{Conclusion}
We introduce DV-Flow, a hybrid flow-matching framework that incorporates a deterministic model as an additional velocity component, rather than treating it as the endpoint of the transport path. This design avoids deep computational graphs and preserves a fixed probability path, resulting in more stable and efficient end-to-end training. Empirically, DV-Flow consistently outperforms baseline methods across multiple benchmark datasets, improving both predictive performance and training stability in fully joint settings. The proposed dual-velocity parameterization may also extend to broader conditional-generation domains, which we leave for future work.



\section*{Impact Statement}

This work aims to improve the reliability and stability of probabilistic time series forecasting, which may benefit decision-making in applications such as weather, energy, and finance, while sharing the same considerations regarding data bias and responsible deployment as other forecasting models.

\nocite{*}

\bibliography{references}
\bibliographystyle{tmlr}

\newpage
\appendix
\appendix

\section{Derivation of distribution learning with linear assumption}
\label{sec:appendixA}

\subsection{Theorem proof}
\begin{proof}
Consider the DV-Flow ODE
\begin{equation}
    \frac{d\by^\tau}{d\tau}
    =
    f_\theta(\bx)+g_\phi(\by^\tau,\tau,\bx),
    \qquad
    \by^0=\bepsilon .
\end{equation}
Define the shifted residual coordinate
\begin{equation}
    \br^\tau = \by^\tau - \tau f_\theta(\bx).
\end{equation}
Since $f_\theta(\bx)$ is independent of $\tau$, differentiating both sides gives
\begin{equation}
    \frac{d\br^\tau}{d\tau}
    =
    \frac{d\by^\tau}{d\tau}
    -
    f_\theta(\bx).
\end{equation}
Substituting the DV-Flow ODE, we obtain
\begin{align}
    \frac{d\br^\tau}{d\tau}
    &=
    f_\theta(\bx)+g_\phi(\by^\tau,\tau,\bx)
    -
    f_\theta(\bx) \\
    &=
    g_\phi(\by^\tau,\tau,\bx).
\end{align}
Using $\by^\tau=\br^\tau+\tau f_\theta(\bx)$, this becomes
\begin{equation}
    \frac{d\br^\tau}{d\tau}
    =
    g_\phi(\br^\tau+\tau f_\theta(\bx),\tau,\bx).
\end{equation}
Let
\begin{equation}
    g'(\br^\tau,\tau)
    =
    g_\phi(\br^\tau+\tau f_\theta(\bx),\tau,\bx).
\end{equation}
Then the residual coordinate follows the ODE
\begin{equation}
    \frac{d\br^\tau}{d\tau}
    =
    g'(\br^\tau,\tau),
    \qquad
    \br^0=\by^0=\bepsilon.
\end{equation}
At $\tau=1$, we have
\begin{equation}
    \br^1
    =
    \by^1 - f_\theta(\bx).
\end{equation}
Therefore, if the DV-Flow ODE transports $\bepsilon$ to samples
$\by^1\sim p(\by\mid \bx)$, then the transformed ODE defined by $g'$
transports $\bepsilon$ to samples
\begin{equation}
    \br^1 = \by^1 - f_\theta(\bx).
\end{equation}
Equivalently, the terminal distribution of the residual-coordinate ODE is
the distribution of $\by-f_\theta(\bx)$.

In particular, if
\begin{equation}
    f_\theta(\bx)=\mathbb{E}[\by\mid \bx],
\end{equation}
then
\begin{equation}
    \mathbb{E}[\br^1\mid \bx]
    =
    \mathbb{E}[\by-f_\theta(\bx)\mid \bx]
    =
    \mathbb{E}[\by\mid \bx]-f_\theta(\bx)
    =
    0.
\end{equation}
Thus the ODE defined by $g'$ pushes Gaussian noise to a zero-mean residual
distribution.

Alternatively, suppose that $f_\theta(\bx)$ is a coordinate-wise
conditional median of $\by$ given $\bx$, i.e.,
\begin{equation}
    f_\theta(\bx)
    =
    \operatorname{Med}(\by\mid\bx).
\end{equation}
By the translation equivariance of the median,
\begin{equation}
\begin{aligned}
    \operatorname{Med}(\br^1\mid\bx)
    &=
    \operatorname{Med}\bigl(\by-f_\theta(\bx)\mid\bx\bigr) \\
    &=
    \operatorname{Med}(\by\mid\bx)-f_\theta(\bx)
    =
    0.
\end{aligned}
\end{equation}
Thus, the ODE defined by $g'$ pushes Gaussian noise to a residual
distribution for which zero is a coordinate-wise conditional median.
\end{proof}

\subsection{General linear form}

Assume the prior is a standard Gaussian distribution $\by^0 \sim \mathcal{N}(0,I)$, and the local velocity has a linear form: $g_{\phi}=A(\tau)\by^{\tau}+a(\tau)$. The ODE becomes: $\Dot{\by}^{\tau} = A(\tau)\by^{\tau}+a(\tau)+f_{\theta}(\bx)$. Consider the homogeneous part 
\[
\Dot{\by}^{\tau} = A(\tau)\by^{\tau}
\]
The solution is $\by^{\tau} = \Phi(\tau,s)\by^s$, where $\frac{\partial \Phi(\tau,s)}{\partial \tau} = A(\tau)\Phi(\tau,s)$, and $\Phi(s,s)=I$. Now make an ansatz for some vector $c(\tau)$:  
\begin{align}
\label{eq:ansatz}
    \by^{\tau}=\Phi(\tau,0)c(\tau)
\end{align}
If we differentiate the ansatz:
\begin{align}
\begin{split}
    \Dot{\by}^{\tau} &= \frac{\partial \Phi(\tau,0)}{\partial \tau}c(\tau) +  \Phi(\tau,0) \Dot{c(\tau)} \\
    &= A(\tau)\Phi(\tau,0)c(\tau) +\Phi(\tau,0)  \Dot{c(\tau)} \\
    &= A(\tau)\bx^{\tau} + \Phi(\tau,0)  \Dot{c(\tau)}
\end{split}
\end{align}
Compare with the original ODE $\Dot{\by}^{\tau} = A(\tau)\by^{\tau}+a(\tau)+f_{\theta}(\bx)$, we get $\Phi(\tau,0)  \Dot{c(\tau)} = a(\tau)+f_{\theta}(\bx)$. Therefore, $\Dot{c(\tau)} = \Phi(\tau,0)^{-1}(a(\tau)+f_{\theta}(\bx)) = \Phi(0,\tau)(a(\tau)+f_{\theta}(\bx))$. Integrate for $c(\tau)$:
\begin{align}
    c(\tau) = c(0)+\int_0^{\tau} \Phi(0,s)(a(s)+f_{\theta}(\bx)) ds
\end{align}
At $\tau=0$, $\by^{\tau} = \Phi(0,0)c(0)=Ic(0)=\by^0$, therefore, $c(0)=\by^0$. Back substitute Equation~\ref{eq:ansatz}:
\begin{align}
\label{eq:xtau}
\begin{split}
    \by^{\tau}&=\Phi(\tau,0)c(\tau) \\
    &=\Phi(\tau,0)[\by^0+\int_0^{\tau} \Phi(0,s)(a(s)+f_{\theta}(\bx)) ds] \\
    &= \Phi(\tau,0)\by^0 + \int_0^{\tau} \Phi(\tau,s)(a(s)+f_{\theta}(\bx)) ds
\end{split}
\end{align}
For better understanding of $\Phi(\tau,s)$, according to the Peano-Baker series \cite{baake2011peano}:
\begin{align}
\label{eq:bpseries}
    \begin{split}
        \Phi(\tau,s) &= I + \int_s^{\tau} A(\varsigma)\Phi(\varsigma,s)d\varsigma \\
        &= I + \int_s^{\tau}A(\varsigma)d\varsigma + \int_s^{\tau}A(\varsigma_1)\int_s^{\varsigma_1}A(\varsigma_2)d\varsigma_1d\varsigma_2 + \dots \\
        &= I + \sum_{n=1}^{\infty} \mathcal{I}_n(\tau)
    \end{split}
\end{align}
Where
\[
\mathcal{I}_n(\tau) = \int_s^{\tau}A(\varsigma_1)\int_s^{\varsigma_1}A(\varsigma_2)\dots \int_s^{\varsigma_{n-1}}A(\varsigma_n)d\varsigma_1d\varsigma_2\dots d\varsigma_n
\]

According to Equation~\ref{eq:xtau}, $\by^1 = \Phi(1,0)\by^0 + \int_0^{1} \Phi(1,s)(a(s)+f_{\theta}(\bx)) ds$.
\begin{equation}
\label{eq:final_dist}
    \begin{split}
        \bmu^1 &= \mathbb{E}[\by^1] \\
        &= \mathbb{E}[\Phi(1,0)\by^0 + \int_0^{1} \Phi(1,s)(a(s)+f_{\theta}(\bx)) ds] \\
        &= \Phi(1,0)\mathbb{E}[\by^0] + \int_0^{1} \Phi(1,s)(a(s)+f_{\theta}(\bx)) ds \\
        &= \int_0^{1} \Phi(1,s)(a(s)+f_{\theta}(\bx)) ds \\
        \bSigma^1 &= Var(\by^1) = Var(\Phi(1,0)\by^0 + \int_0^{1} \Phi(1,s)(a(s)+f_{\theta}(\bx)) ds) \\
        &= Var(\Phi(1,0)\by^0) = \Phi(1,0)I\Phi(1,0)^T
    \end{split}
\end{equation}

\subsection{Special Case of Constant A}
For the special case where $A(\tau) \equiv A$ is constant and $a(\tau) \equiv 0$, solving the ODE $\frac{\partial \Phi(\tau,s)}{\partial \tau} = A\Phi(\tau,s)$ with initial condition $\Phi(s,s)=I$, we get $\Phi(\tau,s) = e^{A(\tau-s)}$. Therefore, $\Phi(1,s) = e^{A(1-s)}$, plug in Equation~\ref{eq:final_dist} we get:
\begin{align}
\begin{split}
\bmu_1 &= \int_0^1 \Phi(1,s)f_{\theta}(\bx)ds \\
&= f_{\theta}(\bx)\int_0^1e^{A(1-s)}ds \\
&= f_{\theta}(\bx)[A^{-1}e^{A(1-s)}]|_0^1 \\
&= f_{\theta}(\bx)[-A^{-1} - (-A^{-1})e^A] \\
&= A^{-1}(e^A-I)f_{\theta}(\bx)
\end{split}
\end{align}

Here, the deterministic network $f_{\theta}(\bx)$ contributes only to the mean, while the covariance is entirely determined by the local velocity through $A$. In particular, when $A$ has negative eigenvalues, the transformation $e^A$ contracts the space, leading to a small covariance $\bSigma^1$, where the rollouts of the prediction are centered around the mean.

\section{Algorithms}

\begin{minipage}{0.49\linewidth}
\begin{algorithm}[H]\small  
    \caption{Training}
  \label{alg:training}
  \begin{algorithmic}[1]
    \Require data dist.\ $p^*$; prior $p_0=\mathcal{N}(\mathbf{0},\bK(t,t'))$;  noise $\sigma_{\min}$;  $g_{\phi}$; $f_{\theta}$; $\lambda$
    \State $\mathcal X \sim p^*$ \Comment{full dataset}
    \State $\bs \gets \operatorname{calc\_seasonality}(\mathcal X)$
    \State $\boldsymbol{\sigma}_t \gets \operatorname{calc\_sigma}(\mathcal X)$
    \For{$i=1,2,\dots$}
      \State $\bx, \by \sim p^*(\bx)$; \quad $\by^0 \sim p_0$
      \State $\bx \gets \operatorname{SATN}(\bx,\bs, \bsigma)$ 
      \State $\by \gets \operatorname{SATN}(\by,\bs, \bsigma)$
      \State $\tau \sim \mathcal U(0,1)$
      \State 
        $\by^\tau
        \gets
        (1-\tau)\by^0+\tau\by$
    
      \State 
        $\bu^\tau
        \gets
        \partial_\tau\by^\tau
        =
        \by-\by^0$
    
      \State $\mathcal L(\theta, \phi) \gets
        \text{Eq}~\ref{eq:coflow}$
      \State $\theta, \phi \gets \operatorname{Update}\!\big(\theta,\, \phi, \,\nabla_{\theta, \phi}\mathcal L\big)$
    \EndFor
\end{algorithmic}
\end{algorithm}
\end{minipage}
\hfill
\begin{minipage}{0.49\linewidth}
\begin{algorithm}[H]\small  
  \caption{Sampling}
  \label{alg:sampling}
  \begin{algorithmic}[1]
    \Require prior GP noise $p_0=\mathcal{N}(\mathbf{0},\bK(t,t'))$; trained local velocity net $g_{\phi}$; trained deterministic net $f_{\theta}$; seasonal cycle $\bs$ (from train set); scale $\boldsymbol{\sigma}$ (from train set), steps $N$
    \For{$i=1,2,\dots$}
      \State $\bx \sim p^*(\bx)$; \quad $\by^0 \sim p_0$
      \State $\bx \gets \operatorname{SATN}(\bx,\bs,\bsigma)$
      \State $\hat{\by} \gets f_{\theta}(\bx)$
      \State $\hat{v} \gets g_{\phi}+\hat{\by}$
      \State
        $\begin{aligned}
        \hat{\by}^{\,1}
        \leftarrow
        &\operatorname{ODESolver}\!\big(
            \hat{v},\by^0,[0,1],N=150; \\
            &\operatorname{method}=\mathrm{Dopri5},
            \operatorname{atol}=10^{-6},
            \operatorname{rtol}=10^{-3}
        \big)
        \end{aligned}$
    \Statex \Comment{Invert SATN normalization}
      \State $\by^1 \gets \boldsymbol{\sigma}_t \odot \by^1 + \bx_{t_0}$
      \State $\by^1 \gets \operatorname{add\_seasonal}(\by^1,\,\bs)$
    \EndFor
\end{algorithmic}
\end{algorithm}
\end{minipage}

\section{Baselines}
\label{uq-exp}
For all uncertainty quantification experiments, we conduct our own evaluations of the baseline models. Specifically, we run the official implementations of TimeGrad, Diffusion-TS, TMDM and $D^3U$ using their released code and default settings. For DeepAR , TFT , PatchTST , and iTransformer , we use the implementations provided by GluonTS  applying default hyperparameters, except for iTransformer . Since our model uses iTransformer as the backbone, we align the configuration and set the hidden size to $1024$ for consistency.

\section{Evaluation Metics} \label{sec:evaluation}

Following \cite{lim2021temporal},  q-Risk is defined as:
\[
q\text{-Risk} = 
\frac{
2 \sum_{i=1}^D \sum_{x_{i,t} \in \tilde{\mathcal{X}}} \sum_{t=t_0+1}^{T} 
QL(x_{i,t}, \hat{x}_{i,t}, q)
}{\sum_{i=1}^D
\sum_{x_{i,t} \in \tilde{\mathcal{X}}} \sum_{t=t_0+1}^{T} |x_{i,t}|
}, \quad QL(x, \hat{x}, q) = q(x - \hat{x})_{+} + (1 - q)(\hat{x} - x)_{+}
\]
Where $\tilde{\mathcal{X}}$ is the test data space with $M$ samples, and $q$ is the quantile. We report q-Risk at $q=0.9$, denoted as $Q90$.

\section{Additional Controlled Experiments}
\label{app:controlled_exp}

In order to study whether the advantage of the dual-velocity design depends on the simultaneous presence of SATN and the correlated GP prior, we conducted an additional controlled experiment in which both components were removed: (1) we replaced SATN with standard global normalization, using the global mean and standard deviation of each dimension calculated from the training set, and (2) we replaced the correlated GP prior with independent Gaussian noise. From Table~\ref{tab_app:hybrid_results}, we can see that the advantage of the dual-velocity design remains present in the more simplified setting.

\begin{table}[t]
\centering
\caption{Test set performance across hybrid formulations. $\downarrow$}
\label{tab_app:hybrid_results}
\scriptsize

\begin{adjustbox}{width=0.95\textwidth}
\begin{tabular}{llrrrrrrrr}
\toprule
Dataset & Metric 
& Residual$_{0.3}$ 
& DP$_{0.3}$ 
& DV-Flow\\
\midrule
\multirow{2}{*}{Electricity}
& $\text{CRPS}_{\text{sum}}$ & $0.0309 \pm 0.007$ & $0.0231 \pm 0.02$ & $\mathbf{0.0225} \pm 0.003$ \\
& Q90  & $0.0361 \pm 0.002$ & $\mathbf{0.0334} \pm 0.003$ & $0.0342 \pm 0.001$ \\
\midrule
\multirow{2}{*}{Exchange}
& $\text{CRPS}_{\text{sum}}$ & $0.0106 \pm 0.003$ & $0.0125 \pm 0.005$ & $\mathbf{0.0053} \pm 0.000$  \\
& Q90  & $0.0154 \pm 0.010$ & $0.0169 \pm 0.009$ & $\mathbf{0.0077} \pm 0.000$  \\
\midrule
\multirow{2}{*}{Traffic}
& $\text{CRPS}_{\text{sum}}$ & $0.0475 \pm 0.002$ & $0.0512 \pm 0.018$ & $\mathbf{0.0397} \pm 0.002$  \\
& Q90  & $0.1153 \pm 0.004$ & $0.1407 \pm 0.052$ & $\mathbf{0.1094} \pm 0.003$ \\
\midrule
\multirow{2}{*}{Solar}
& $\text{CRPS}_{\text{sum}}$ & $0.4156 \pm 0.015$ & $0.4045 \pm 0.149$ & $\mathbf{0.3182} \pm 0.054$\\
& Q90  & $0.2318 \pm 0.057$ & $0.1727 \pm 0.005$ & $\mathbf{0.1700} \pm 0.004$ \\
\midrule
\multirow{2}{*}{KDDCup}
& $\text{CRPS}_{\text{sum}}$ & $0.1948 \pm 0.051$ & $0.1798 \pm 0.041$ & $\mathbf{0.1718} \pm 0.037$ \\
& Q90  & $\mathbf{0.2797} \pm 0.033$ & $0.3348 \pm 0.038$ & $0.3309 \pm 0.013$ \\
\midrule
\multirow{2}{*}{SP500}
& $\text{CRPS}_{\text{sum}}$ & $0.8134 \pm 0.072$ & $0.8480 \pm 0.058$ & $\mathbf{0.7799} \pm 0.018$\\
& Q90  & $0.5184 \pm 0.001$ & $0.5232 \pm 0.004$ & $\mathbf{0.5167} \pm 0.007$ \\
\bottomrule
\end{tabular}
\end{adjustbox}
\end{table}

\section{Velocity Error on Other Datasets}
\label{app:velocity_error}
Figure~\ref{fig:velocity_error_additional} presents the velocity errors on the Traffic, Exchange Rate, KDD, and SP500 datasets. We observe a consistent pattern across all datasets except SP500: DV-Flow achieves lower velocity errors during the early stages of the trajectory. On SP500, the three variants exhibit comparable errors, although DV-Flow maintains a slight advantage throughout the entire trajectory.
\begin{figure*}[ht]
\begin{center}
\includegraphics[page=1,width=0.9\textwidth]{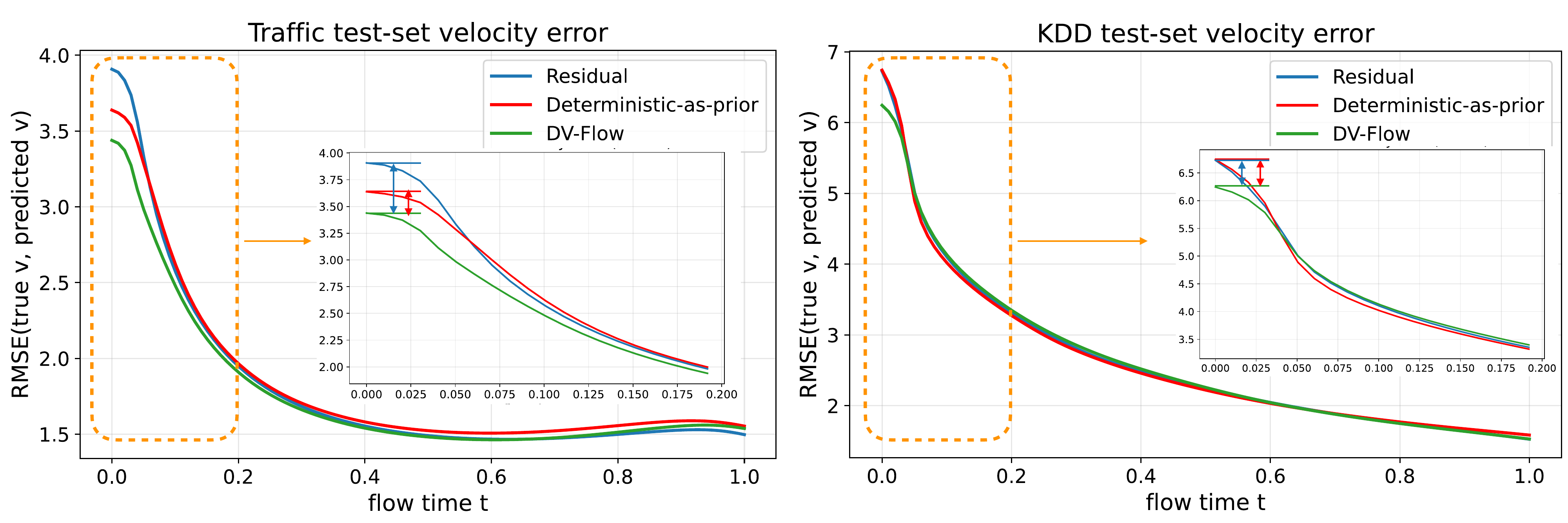}
    \vspace{0.5em}
\includegraphics[page=2,width=0.9\textwidth]{figures/velocity_error_additional.pdf}
\caption{\textbf{Velocity Error Comparison.} We compare the RMSE of velocity predictions against ground truth on the test set for DV-Flow, residual-based, and deterministic-as-prior. While all three formulations converge as $\tau \rightarrow 1$, DV-Flow (green) starts with the lowest velocity error when $\tau \rightarrow 0$, demonstrating our formulation has an advantage when $\tau$ is small.}
\label{fig:velocity_error_additional}
\end{center}
\end{figure*}

\section{Gradient Norm on Other Datasets}
\label{app:grad_norm}

Figure~\ref{fig:gradnorm_additional} presents the gradient norms on the additional datasets with $\lambda=0.3$. Across all datasets, the residual-based and deterministic-as-prior formulations exhibit large gradient spikes, whereas DV-Flow maintains stable gradients without pronounced spikes.

\begin{figure}[htbp]
\centering
\includegraphics[width=0.45\linewidth]{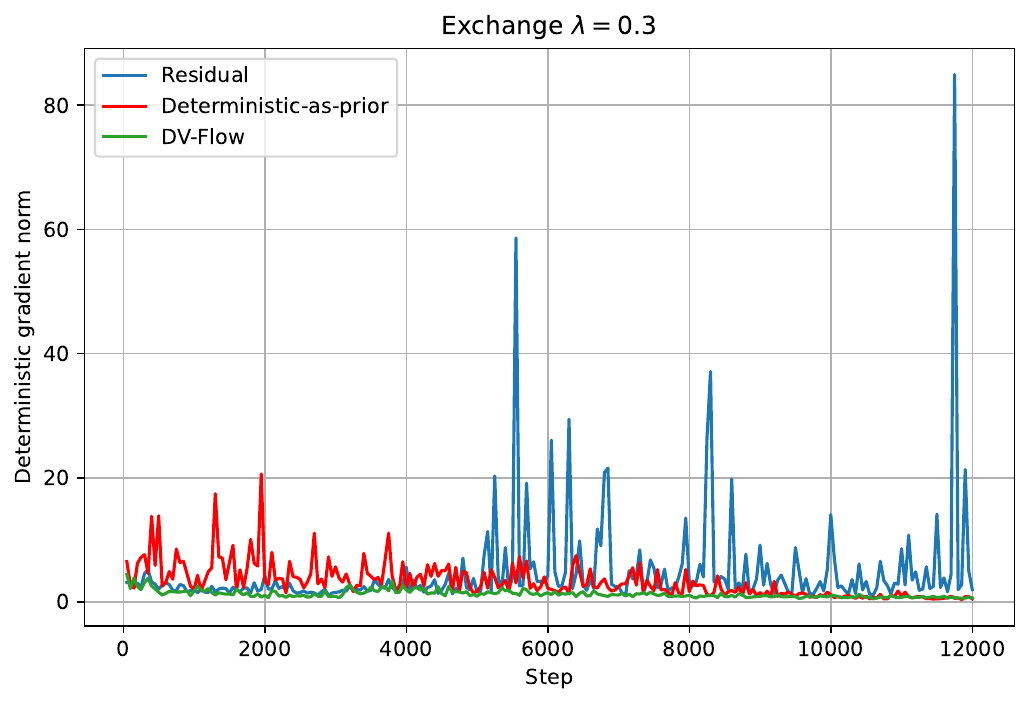}
\includegraphics[width=0.45\linewidth]{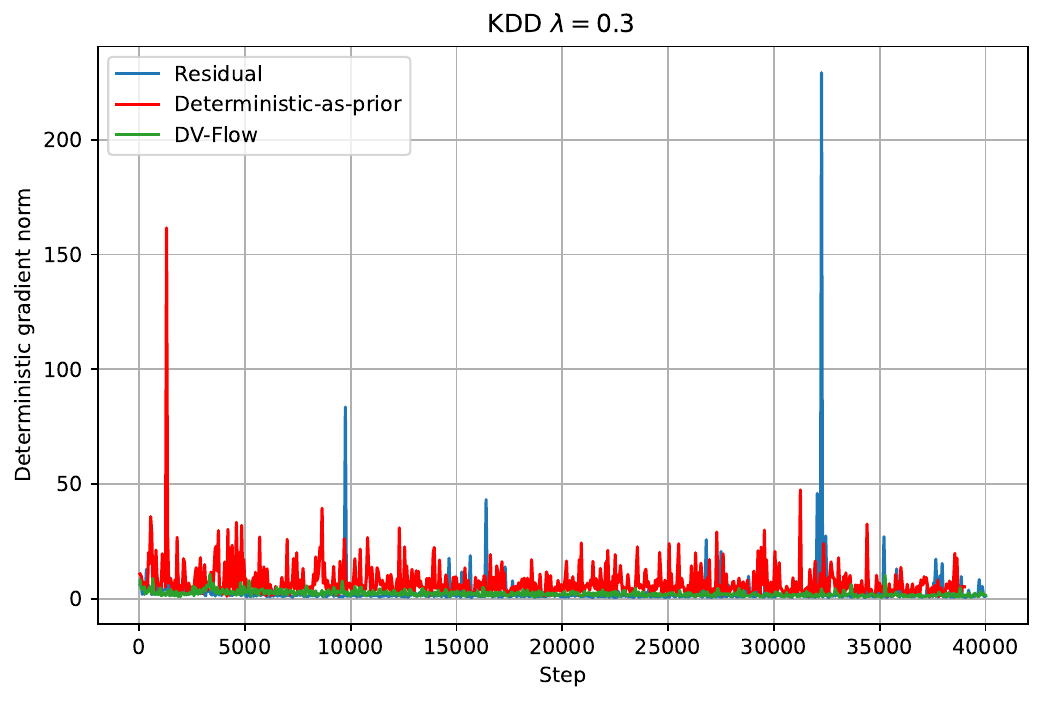}
\vspace{0.5em}
\includegraphics[width=0.45\linewidth]{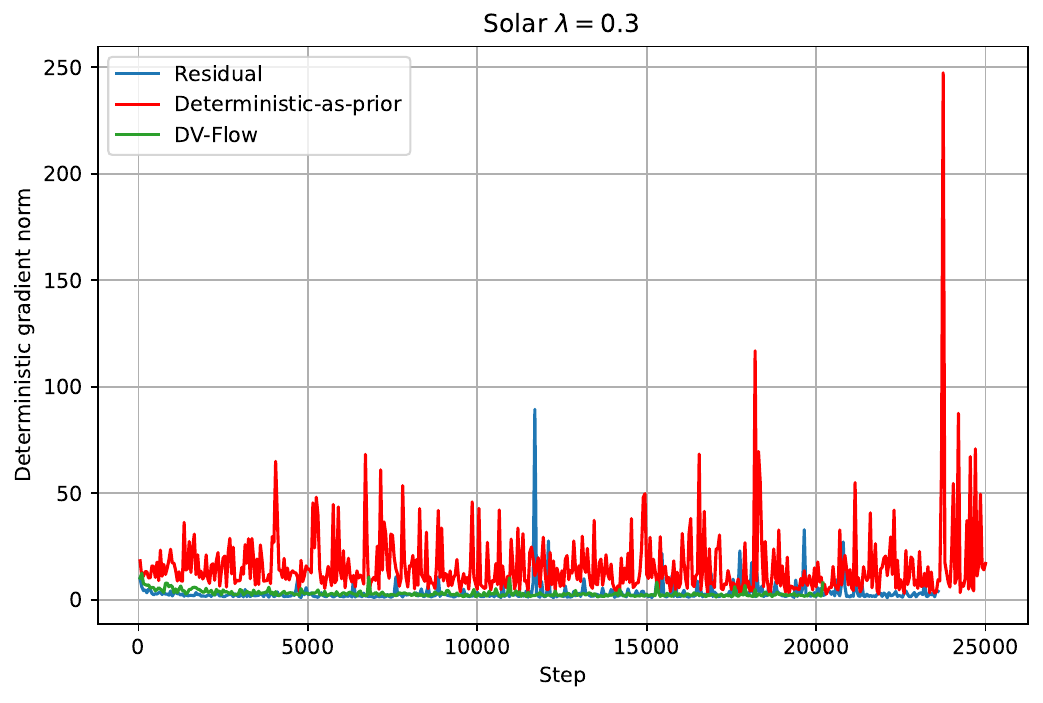}
\includegraphics[width=0.45\linewidth]{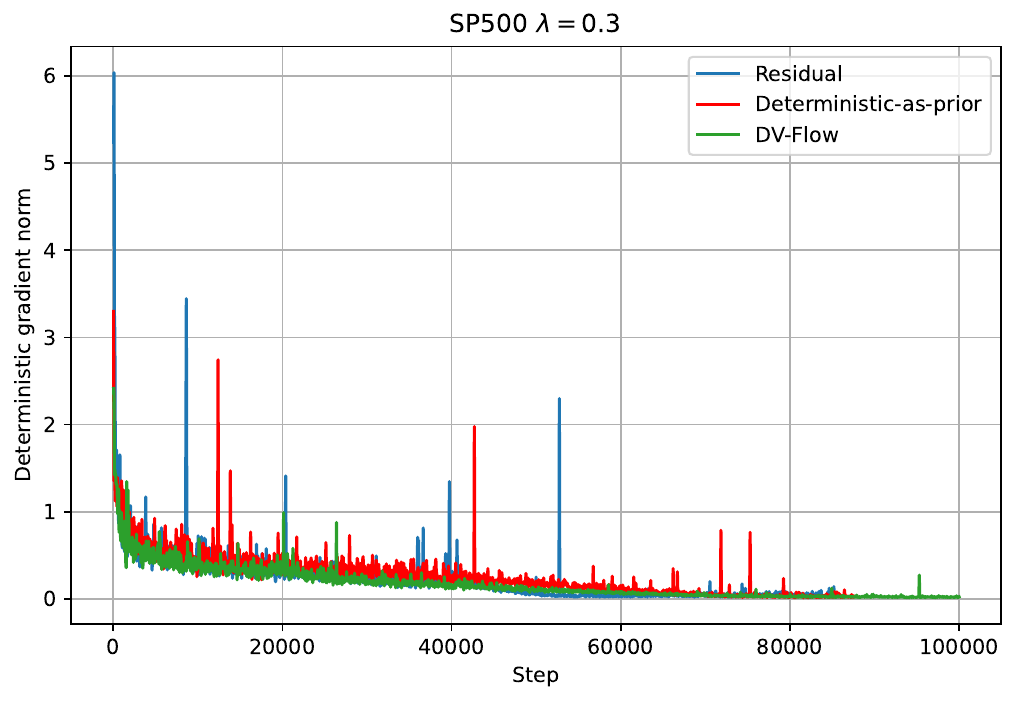}
\vspace{0.5em}
\includegraphics[width=0.45\linewidth]{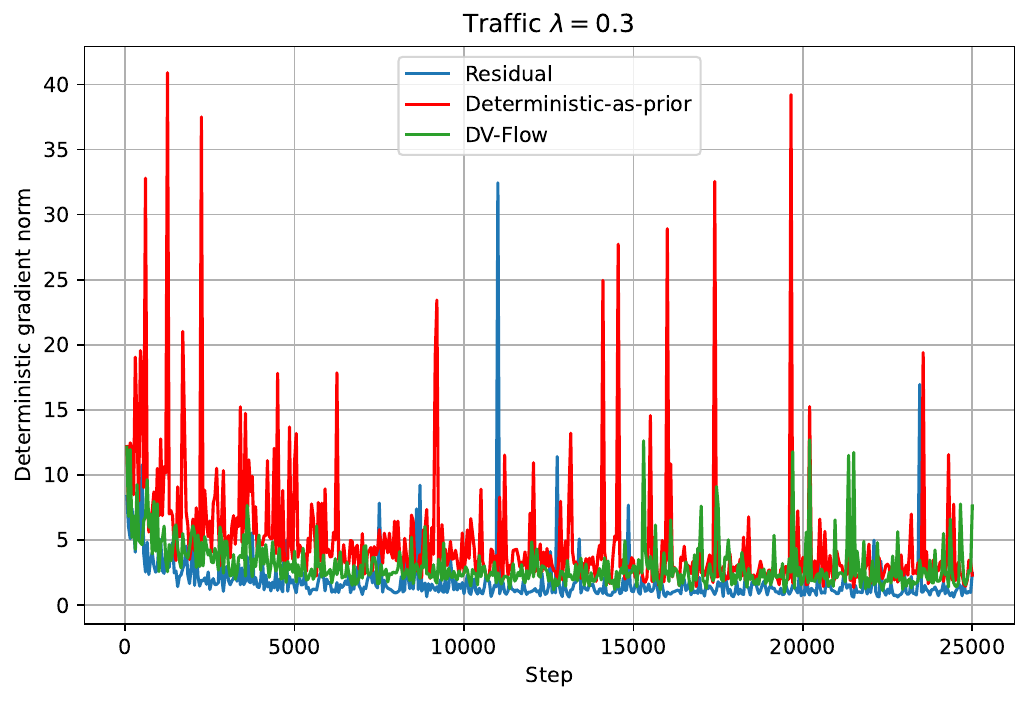}
\caption{\textbf{Gradient norm.}  We compare DV-Flow, the residual-based and deterministic-as-prior formulations with $\lambda=0.3$.  In all datasets, residual-based and deterministic-as-prior training exhibits large gradient spikes, whereas DV-Flow remains stable without large spikes.}
\label{fig:gradnorm_additional}
\end{figure}

\section{Visualization}
\begin{figure}[ht]
\centering
\includegraphics[width=0.5\columnwidth]{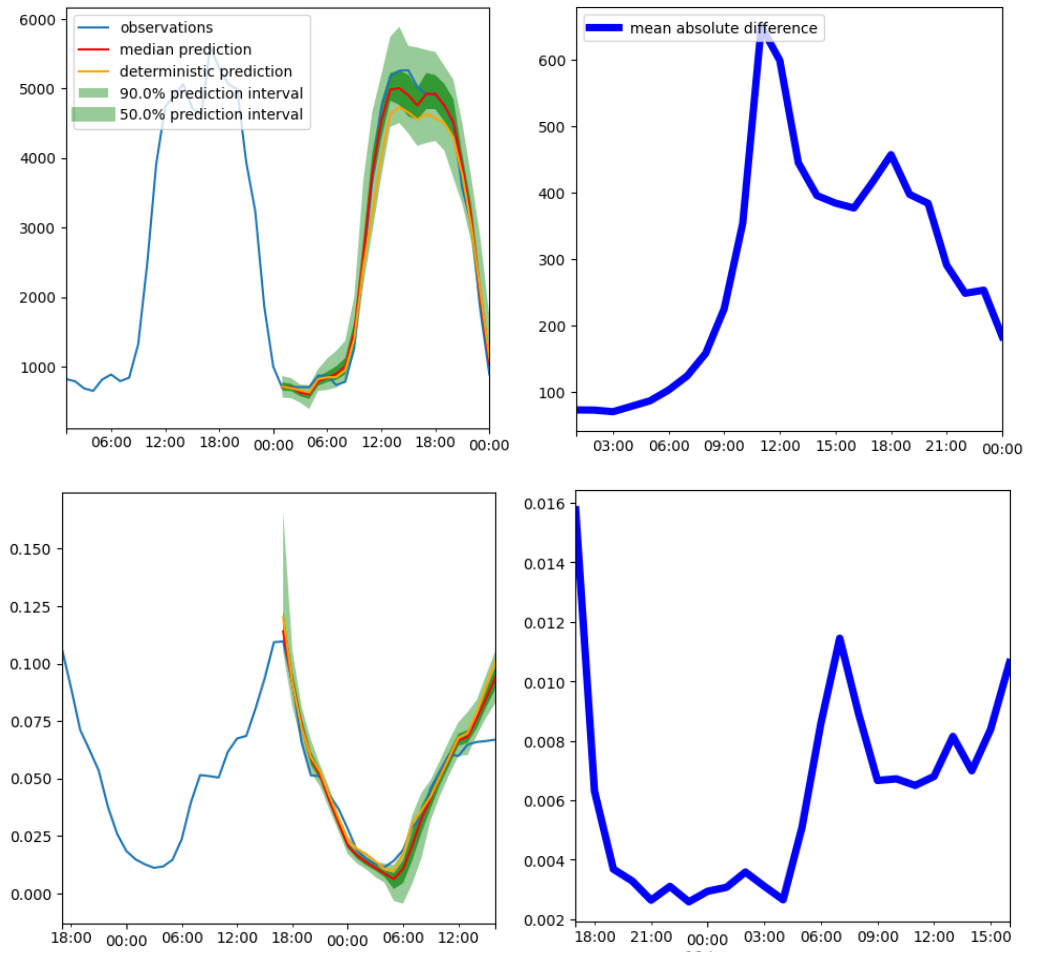}
\caption{Left: prediction interval; Right: MAE between the final prediction and deterministic prediction. Top: Electricity; Bottom: Traffic.}
\label{fig:visuals}
\end{figure}
In this section, we aim to gain a visual understanding of how DV-Flow generates its predictions and to examine the form of the resulting prediction bands. Figure~\ref{fig:visuals} plots predictive intervals and the mean absolute deviation between our final predictions and the deterministic prediction for two datasets. Variability does not simply grow toward the end of the forecast window; instead, it mirrors the variance present in the training data. For electricity, variance is highest around 12:00–18:00 when demand peaks and lowest overnight (21:00–06:00). Our model reproduces these patterns, yielding predictive distributions that closely match the empirical variability.

\section{Seasonal-Adjusted Time-Adaptive Normalization}
\label{sec:appendix_SATN}

we design Seasonal-Adjusted Time-Adaptive Normalization (SATN). We remove seasonal components that do not require a learnable module directly from the data, allowing the forecasting model to focus on learning the remaining temporal dynamics. Let $\bs$ denote seasonality:
\[
\by_t \leftarrow \by_t - \bs_t
\]
In typical continuous time series, residuals near the present—such as $\by_{t_0 + 1}$ —tend to have a lower variance and are closer in value to the most recent observation $\bx_{t_0}$ (Figure~\ref{fig:normalization} in Appendix, step 1, 3).

We design our normalization method from the perspective of Brownian motion. Specifically, we assume that the residual sequence $\by$ evolves similarly to a Brownian motion originating from the last step in the context window $\bx_{t_0}$. Under this assumption, we normalize $\by$ to resemble a Gaussian Process with approximately unit marginal variance at each time step. This facilitates stable training and better alignment with the prior distribution used in flow matching.

We normalize $\by$ as follows:
\begin{equation}
    \by_t \leftarrow \frac{\by_t - \bx_{t_0}}{\boldsymbol{\sigma}_t} 
\end{equation}
$\boldsymbol{\sigma}_t$ is computed on the basis of the linear interpolation property of Brownian motion, where the variance grows proportionally with time: $ \boldsymbol{\sigma}_{t_{0} + 1}^2 = \bbE {\big[(\by_{(t_0 + 1)} - \bx_{t_0}) ^2\big]}, \quad
 \boldsymbol{\sigma}_{T}^2 = \bbE{\big[(\by_{T} - \bx_{t_0})^2\big]} $.
Usually $\boldsymbol{\sigma}_{t_{0}+1}^2 < \boldsymbol{\sigma}_{T}^2$. To ensure that the variance increases monotonically with $t$, we estimate it using the following linear schedule $\boldsymbol{\sigma}_{t}^2 = (t - (t_0 + 1))\Delta + \boldsymbol{\sigma}^2_{(t_0 + 1)} , 
    \Delta = \frac{\alpha}{|T-(t_{0} +1)|} (\boldsymbol{\sigma}_{T}^2 - \boldsymbol{\sigma}_{(t_{0}+1)}^2)$ where 
$\alpha$ is a hyperparameter. All calculations are performed independently for each variable of the time series. 

For the context input $\bx_{1:t_0}$, we normalize it similarly to the prediction window $\by$ to maintain consistency in scale throughout the sequence. Specifically, for each $t < t_0$, we compute $\bx_t \leftarrow \frac{\bx_t - \bx_{t_0}}{\sigma_t}, ~~~~ \sigma_t^2 = \frac{t_0 - t}{t_0 - 1} \bbE[(\bx_{1} - \bx_{t_0})^2].$
We encode the context sequence $\bx_t$ and the prediction sequence $\by^{\tau}_t$ separately and then concatenate the encoded $E(\bx_t)$ from the context window and $E(\by^{\tau}_t)$ from the prediction window into a single sequence, before running it through the transformer.  The training algorithm can be found in Algorithm~\ref{alg:training}.

\begin{figure*}[t]
\vskip 0.2in
\begin{center}
\centerline{\includegraphics[width=\textwidth]{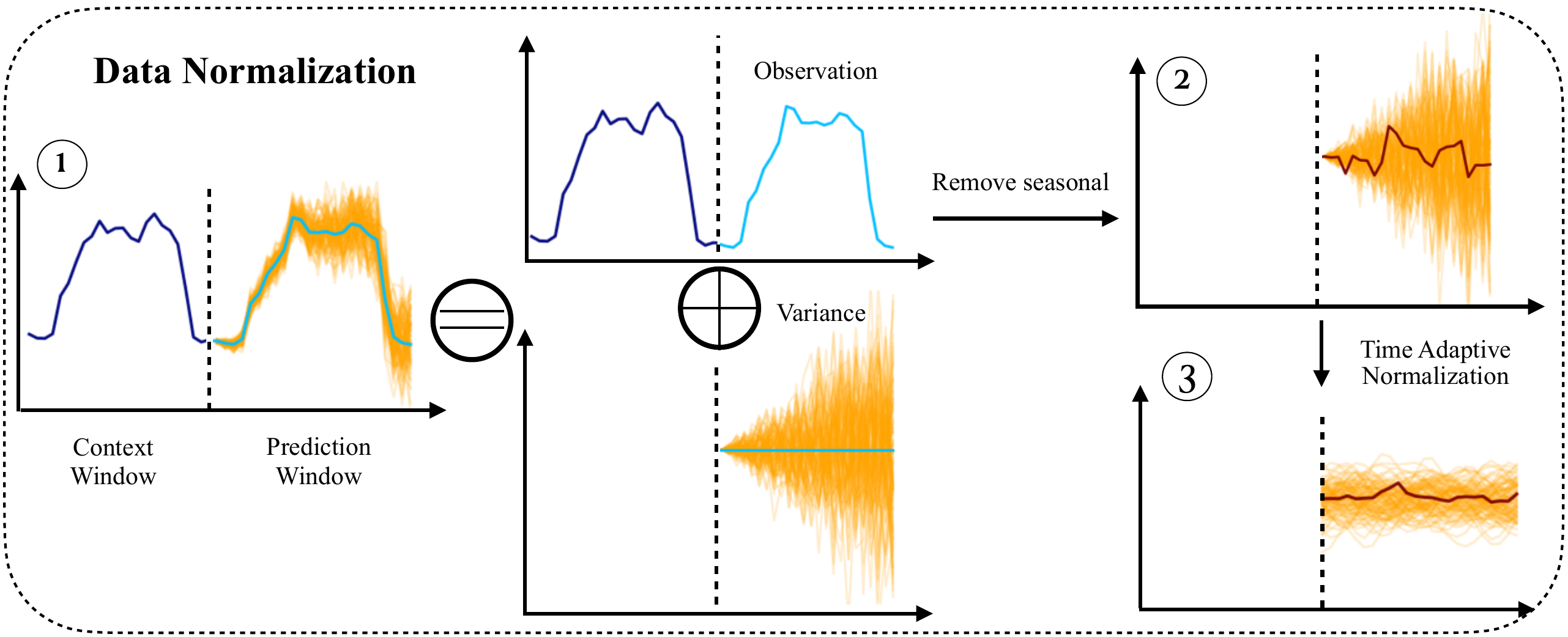}}
\caption{Data Normalization. Step 1: Assume the data follows Brownian motion after the last time step of the context window. Variance is smaller when close to the context window, and becomes bigger as time step increases. The data can be decomposed into two parts: the observation, and the variance. Step 2: Remove seasonal component. Step 3: Run time-adaptive normalization on the residual. At this point, the variance across all time steps are scaled to the same level.}
\label{fig:normalization}
\end{center}
\vskip -0.2in
\end{figure*}

The seasonal component of the time series often captures dominant low-frequency structures. Let $W$ denote the seasonal cycle length, which can be inferred from the data's periodicity. For example, in hourly time series with daily seasonality, $W=24$. Define $\alpha_t \in (1, \ldots, w)$ as the position of time $t$ within its seasonal cycle. The seasonal mean at cycle position $w$ is computed as: $\bu_{w} = \mathrm{mean}(\{\bx_{t}: \alpha_t = w\})$. We then define the deterministic seasonal estimate as: $\bs_{t} = \bu_{\alpha_t}$. This simple, nonparametric method provides a robust seasonal baseline, allowing the generative model to focus on learning the residual distribution.

We represent the seasonal cycle as $\textbf{s}* \in \mathbb{R}^{D \times w}$, where $\textbf{s}*$ is assumed to be drawn from a Gaussian distribution:
\begin{equation}
\textbf{s}* = \mathcal{N} (\bmu_s, \sigma_{\text{min}}I)
\end{equation}
When $\sigma_{\text{min}}$ approaches $0$, the seasonal cycle $\textbf{s}*$ will be very close to its mean $\bmu_s$. 
We estimate $\bs_{(t_0+1):T}$ as the empirical mean of the seasonal cycle. Specifically, we select an arbitrary time step within the cycle window $[1,w]$ and sample all corresponding data points $\mathcal{X} \in \mathbb{R}^{D \times w}$ from the training data that align with the chosen start position of the cycle window. For example, in hourly data, if hour $1$ is chosen as the start of the cycle window, we extract all $24$-length data points from different days that begin at hour 11. The mean $\bmu_s$ is then computed as $\bmu_s = \frac{1}{N}\sum \mathcal{X}$, where $N$ is the total number of sampled data points.

After obtaining the seasonal cycle $\textbf{s}*$, we perform adjustment and concatenation to reshape it to align with the current time series $\textbf{x}$. Suppose $\textbf{x}$ starts at position $m$ and ends at position $n$, where $m$ and $n$ are relative positions within the cycle window $[1:w]$. The adjustment involves shifting $\textbf{s}*$ such that the adjusted seasonal cycle $\textbf{s}^a$ is defined as $\textbf{s}^a = [\textbf{s}*[m:];\textbf{s}*[:m]]$, where $[;]$ denotes concatenation.

Let the length of $\textbf{x}$ be $l$. The time series $\textbf{x}$ spans $\lfloor l/w \rfloor$ full cycles, with an offset of $l \bmod w$ steps. The final full seasonal effect is constructed by fist concatenating $\textbf{s}^a$ repeated $\lfloor l/w \rfloor$ times to account for the full cycles, and then appending the first $l \bmod w$ steps of $\textbf{s}^a$ to account for the remaining partial cycle. A visual illustration of this process can be found in Figure~\ref{seasonal}.

\[
\textbf{s} = [\lfloor l/w \rfloor*[\textbf{s}^a];\textbf{s}^a[:l \bmod w]]
\]

\begin{figure*}[ht]
\vskip 0.2in
\begin{center}
\centerline{\includegraphics[width=\textwidth]{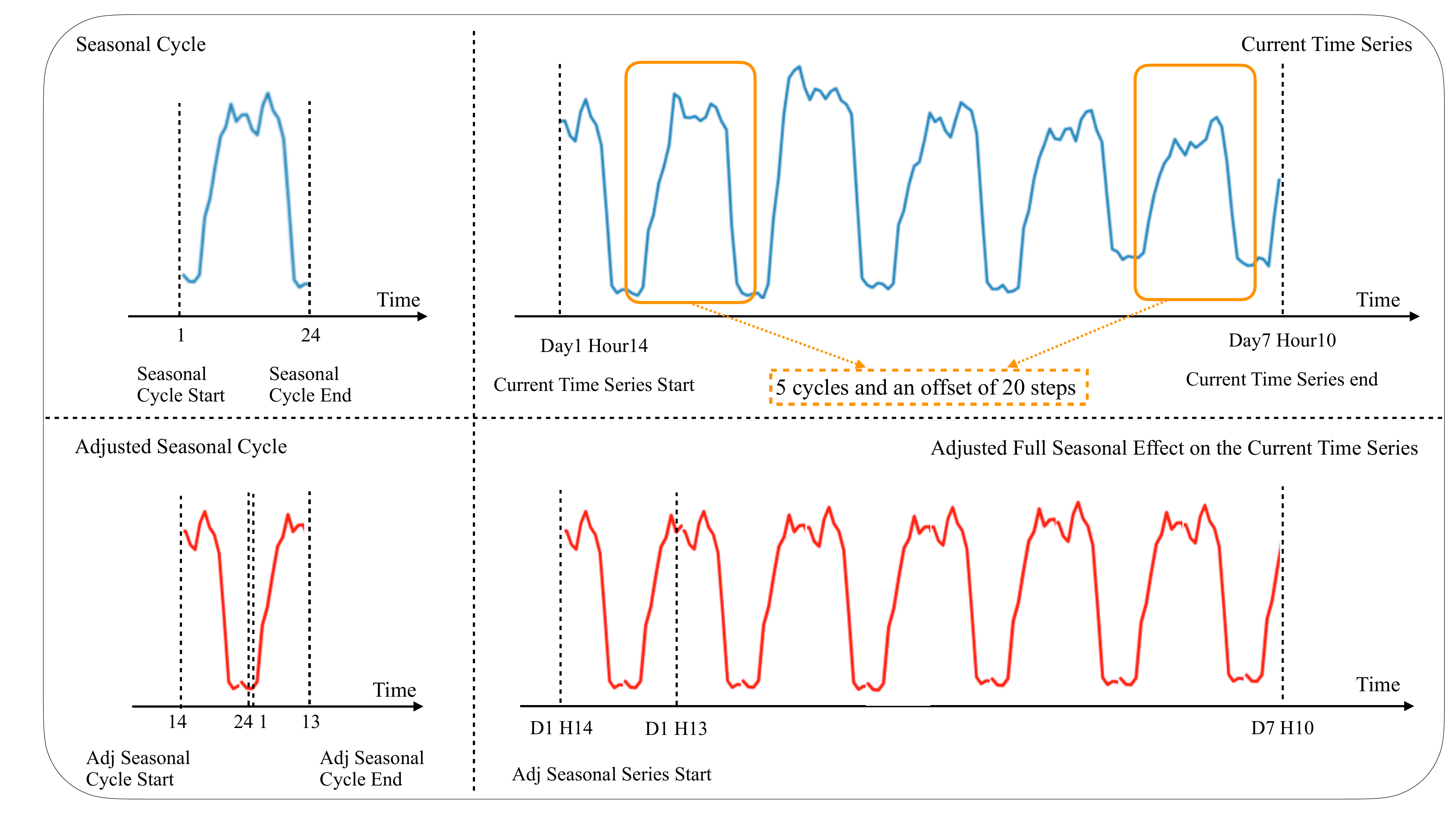}}
\caption{Top Left: Seasonal Cycle that starts at position 1 of the cycle window. Top Right: Current time series data that starts at position 14 of the cycle window, with 5 cycles and 10 offset steps. Bottom Left: Adjusted Seasonal Cycle that starts at position 14, same as the current time series. Bottom Right: Adjusted full seasonal effect on the current time series, which is the concatenation of 5 adjusted seasonal cycles and the first 10 steps from the adjusted seasonal cycle.}
\label{seasonal}
\end{center}
\vskip -0.2in
\end{figure*}

\section{More results on the toy dataset}
\label{app:more-results-on-toy-dataset}

In this toy dataset, $\bx$ is a random vector uniformly sampled from $[-1, 1]^3$. We compute the centers of two Gaussian mixtures with 
$\mu_1 = \bW_{1} \bx$, and $\mu_2 = d \frac{\bW_{2} \bx}{\|\bW_{2} \bx\|} + \mu_1$. Here $\bW_1$ and $\bW_2$ are two parameters randomly initialized from a standard Gaussian distribution. Note that the distance between the two centers is always $d$. Both mixtures have covariance $I$. The first mixture
has weight $0.7$, while the second misture has weight $0.3$. Therefore, $\bx$ decides both the mean and variance of the predictive distribution. Compared with 
the first toy dataset, the covarinace of the entire Gaussian mixture does not directly correspond to scale of $\bx$, therefore, it is harder to learn for the predictive model.

For both toy datasets, we train a DV-Flow model with a feedforward backbone, which has 3 hidden layers. The size of each hidden layer is fixed to 32. The model is trained with 1000 epochs as we can afford the computation. On each dataset, we test the model with 100 random runs. The performance is computed from the mean and variance from these runs.   

On this toy dataset, we also compare predictive samples of $\by$ for a single $\bx$ against the true predictive distribution. On this dataset, we see that $\lambda=0$ gives much better performances than other $\lambda$ choices. It is also much better than pre-training $f_\theta$ with point-wise loss. It worth more investigation how the performances of different $\lambda$ values are related to the distribution, but it is clear 
that the choice of $\lambda$ is important to the model performance.   

\begin{figure}[ht]
  \begin{tabular}{ll}
   \hspace{-8mm}
   \raisebox{5mm}{
   \scalebox{0.9}{
   \input{figures/gmm_cd2_examples} 
   }}
&
\definecolor{darkblue}{rgb}{0.05, 0.1, 0.7}
\begin{tikzpicture}
\begin{axis}[
  width=7.5cm, height=5cm,
  xlabel={$\lambda$},
  ylabel={Sliced Wasserstein distance},
  xlabel style={font=\small, yshift=1mm},
  ylabel style={font=\small, xshift=2mm},
  tick label style={font=\small},
  legend style={font=\small, at={(0.97,0.03)}, anchor=south east},
  xtick={0,1,2,3,4},
  xticklabels={0,1,10,100,1000},
  ymin=0.255,
  grid=both,
  grid style={line width=0.2pt, draw=gray!30},
  major grid style={line width=0.4pt, draw=gray!50},
  clip=false,
]

\addplot [red!20, fill=red!20, forget plot]
  coordinates {(0,0.377581) (4,0.377581)
               (4,0.385977) (0,0.385977)}
  -- cycle;

\addplot [red, dashed, thick, forget plot]
  coordinates {(0,0.381779) (4,0.381779)};
\addlegendimage{red, dashed, thick}
\addlegendentry{pre-train $f_\theta$}

\addplot [darkblue, thick, mark=*, mark size=2pt, error bars/.cd,
          y dir=both, y explicit]
  coordinates {(0,0.265372) +- (0,0.002150) (1,0.325865) +- (0,0.003219) (2,0.371139) +- (0,0.003894) (3,0.383210) +- (0,0.004144) (4,0.385341) +- (0,0.004136)};
\addlegendentry{joint training}

\end{axis}
\end{tikzpicture}
  \end{tabular}
  \caption{Investigation of the effect of point-wise loss on another
  toy dataset generated from Gaussian mixture models.}
  \label{fig:toy-example2}
\end{figure}


\end{document}